\documentclass[10pt,twocolumn,a4paper]{article}

\usepackage{times}
\usepackage{epsfig}
\usepackage{graphicx}
\usepackage{amsmath}
\usepackage{amssymb}
\usepackage{amsthm}

\theoremstyle{definition}
\newtheorem{lemma}{Lemma}

\theoremstyle{remark}
\newtheorem*{remark}{Remark}


\usepackage{bbold}      
\usepackage{mathtools}  

\DeclareMathOperator*{\argmin}{arg\,min}

\newcommand{\figref}[1]{Fig.~\ref{#1}}

\newcommand{\tableref}[1]{Table~\ref{#1}}

\newcommand{\cf}{\emph{cf.~}}
\newcommand{\ie}{\emph{ie.~}}
\newcommand{\eg}{\emph{eg.~}}

\usepackage[shortlabels]{enumitem}

\newcommand{\realnumbers}{\mathbb{R}}
\providecommand{\R}{\realnumbers}


\newcommand{\neighbour}[2]{\delta^{#2}(#1)}
\newcommand{\neighbourOut}[1]{\neighbour{#1}{+}}
\newcommand{\neighbourIn}[1]{\neighbour{#1}{-}}

\newcommand{\nodes}{V}
\newcommand{\vertices}{\nodes}
\newcommand{\edges}{E}
\newcommand{\contractedEdges}{\mathcal{E}}

\newcommand{\partitionGraph}{\mathcal{G}}
\newcommand{\components}{\mathcal{V}}
\newcommand{\arcs}{\mathcal{A}}

\newcommand{\cost}[1]{c_{#1}}
\newcommand{\branchvar}[1]{y_{#1}}


\usepackage[acronym,shortcuts,nonumberlist]{glossaries}
\glsdisablehyper  

\newacronym{ilp}{ILP}{integer linear program}
\newacronym{lp}{LP}{linear program}
\newacronym{mlt}{\textsc{MLTP}}{moral lineage tracing problem}
\newacronym{mcb}{\textsc{MCBP}}{minimum cost branching problem}
\newacronym{mcp}{\textsc{MCMCP}}{minimum cost multicut problem}
\newacronym{mcbmp}{\textsc{MCBMP}}{minimum cost bipartite matching problem}


\usepackage{booktabs}  
\usepackage{multirow}
\setlength{\heavyrulewidth}{1.5pt}
\setlength{\abovetopsep}{4pt}

\usepackage{algorithm}
\usepackage{algpseudocode}
\usepackage{float}
\newfloat{algorithm}{t}{lop}

\usepackage{tabularx}

\usepackage{tikz}
\usetikzlibrary{positioning}
\usetikzlibrary{calc}
\usetikzlibrary{arrows,decorations.markings}

\graphicspath{{gfx/images/}}
\usepackage[export]{adjustbox}

\newcommand*\samethanks[1][\value{footnote}]{\footnotemark[#1]~}

\usepackage[top=2.5cm, bottom=2.5cm, left=2cm, right=2cm]{geometry}
\usepackage{fancyhdr}
    \pagestyle{fancy}
        \lhead{\small M.~Rempfler, J.-H.~Lange et al.}
    \rhead{\small Efficient Algorithms for Moral Lineage Tracing}

\usepackage{titlesec}
    \titleformat{\section}{\large\bfseries}{\thesection}{0.5em}{}
    \titleformat{\subsection}{\normalfont\bf}{\thesubsection}{0.5em}{}
    \titleformat{\subsubsection}{\normalfont\normalsize\it}{\thesubsubsection}{0.5em}{}
    \titleformat{\paragraph}[runin]{\normalfont\bfseries}{\theparagraph}{0.5em}{}
    \titleformat{\subparagraph}[runin]{\normalfont\normalsize\it}{\thesubparagraph}{0.5em}{}
\usepackage[font=small,labelfont=bf,labelsep=space]{caption}

\usepackage[pagebackref=true,breaklinks=true,letterpaper=true,colorlinks,bookmarks=false]{hyperref}


\setcounter{totalnumber}{99}
\setcounter{topnumber}{99}
\setcounter{bottomnumber}{99}

\begin{document}

\title{Efficient Algorithms for Moral Lineage Tracing}

\author{
  Markus Rempfler\textsuperscript{1}\thanks{Authors contributed equally.}~,
  Jan-Hendrik Lange\textsuperscript{2}\samethanks,
  Florian Jug\textsuperscript{3},
  Corinna Blasse\textsuperscript{3},
  Eugene W.\ Myers\textsuperscript{3},\\
  Bjoern H.\ Menze\textsuperscript{1}
  and Bjoern Andres\textsuperscript{2} \\[.25cm]
  \textsuperscript{1} \textit{Institute for Advanced Study \& Department of Informatics, Technical University of Munich} \\
  \textsuperscript{2} \textit{Max Planck Institute for Informatics, Saarbr\"ucken} \\
  \textsuperscript{3} \textit{Max Planck Institute of Molecular Cell Biology and Genetics, Dresden}
}

\date{}

\maketitle

\begin{abstract}
  \textit{
Lineage tracing, the joint segmentation and tracking of living cells as they move and divide in a sequence of light microscopy images, is a challenging task.  
Jug et al.~\cite{jug-2016} have proposed a mathematical abstraction of this task, the moral lineage tracing problem (MLTP), whose feasible solutions define both a segmentation of every image and a lineage forest of cells.  
Their branch-and-cut algorithm, however, is prone to many cuts and slow convergence for large instances.  
To address this problem, we make three contributions:
(i) we devise the first efficient primal feasible local search algorithms for the MLTP,
(ii) we improve the branch-and-cut algorithm by separating tighter cutting planes and by incorporating our primal algorithms,
(iii) we show in experiments that our algorithms find accurate solutions on the problem instances of Jug et al.\ and scale to larger instances, leveraging moral lineage tracing to practical significance.}
\end{abstract}

\section{Introduction}

Recent advances in microscopy have enabled biologists to
observe organisms on a cellular level with higher spatio-temporal
resolution than before~\cite{chen-2014,greenbaum-2012,tomer-2012}.
Analysis of such microscopy sequences is key to several open questions
in biology, including embryonic development of complex
organisms~\cite{keller-2010,keller-2008}, tissue
formation~\cite{guillot-2013} or the understanding of metastatic
behavior of tumor cells~\cite{zervantonakis-2012}.  
However, to get from a sequence of raw microscopy images to biologically or
clinically relevant quantities, such as cell motility, migration
patterns and differentiation schedules, robust methods for 
\emph{cell lineage tracing} are required and have therefore received
considerable attention
\cite{amat-2014,amat-2013,chenouard-2014,li-2008,maska-2014,meijering-2012}.

Cell lineage tracing is typically considered a two step problem: In
the first step, individual cells are detected and segmented in every
image. Then, in the second step, individual cells are tracked over
time and, in case of a cell division, linked to their ancestor cell, 
to finally arrive at the lineage forest of all cells (Fig.~\ref{fig:teaser}).
The tracking subproblem is complicated by cells that enter or leave the field of view, 
or low temporal resolution that allows large displacements or even multiple
consecutive divisions within one time step. In addition to this,
mistakes made in the first step, leading to over- or undersegmentation
of the cells, propagate into the resulting lineage forest and cause
spurious divisions or missing branches, respectively.  
The tracking subproblem is closely related to multi-target tracking
\cite{berclaz-2011,tang-2015,wang-2014,insafutdinov-2017,tang-2017} or reconstruction of tree-like
structures~\cite{funke-2012,rempfler-2015-media,rempfler-2016,turetken-2016,turetken-2011}. 
It has been cast in the form of different optimization problems \cite{jug-2014,kausler-2012,padfield-2011,schiegg-2015,schiegg-2013}
that can deal with some of the mentioned difficulties, e.g., by selecting from multiple segmentation hypotheses~\cite{schiegg-2015,schiegg-2013}.

\begin{figure}
\includegraphics[width=\columnwidth]{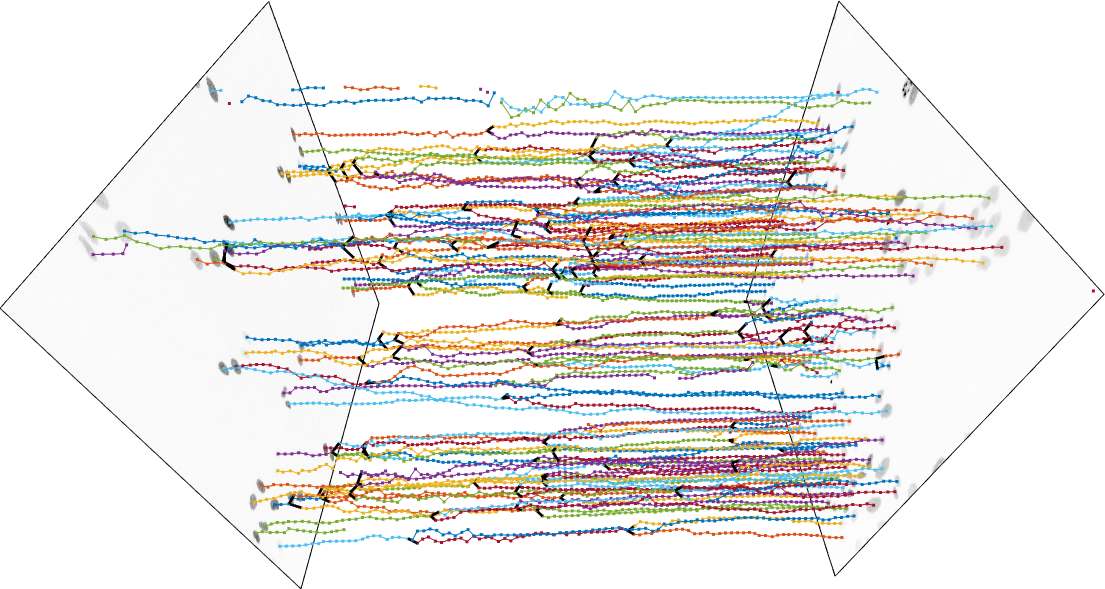}
\caption{Depicted above is a lineage forest of cells from a sequence of microscopy images. The first image of the sequence is shown on the left. The last image is shown on the right. Cell divisions are depicted in black.}
\label{fig:teaser}
\end{figure}

Jug et al.~\cite{jug-2016}, on the other hand, have proposed a rigorous mathematical abstraction of the joint problem which they call the \emph{\ac{mlt}}. 
It is a hybrid of the \emph{\ac{mcp}}, which
has been studied extensively for image
decomposition~\cite{andres-2011,andres-2012,andres-2013,bagon-2011,beier-2016,beier-2015,beier-2014,kappes-2016,keuper-2015,kim-2014,yarkony-2012,yarkony-2015},
and the \emph{minimum cost disjoint arborescence} problem, variations
of which have been applied to reconstruct lineage forests
in~\cite{jug-2014,kausler-2012,padfield-2011,schiegg-2013,schiegg-2015}
or tree-like structures~\cite{funke-2012,turetken-2011,turetken-2016}.
Feasible solutions to the \ac{mlt} define not only a valid cell
lineage forest over time, but also a segmentation of the cells in
every frame (\cf \figref{fig:lineage-forest}). Solving this
optimization problem therefore tackles both subtasks -- segmentation
and tracking -- simultaneously.  While Jug et al.~\cite{jug-2016}
demonstrate the advantages of their approach in terms of robustness,
they also observe that their branch-and-cut algorithm (as well as
the cutting-plane algorithm for the linear relaxation they study) is
prone to a large number of cuts and exhibits slow convergence on large
instances. That, unfortunately, prevents many applications of
the \ac{mlt} in practice, since it would be too computationally
expensive.

\vspace{-1ex}
\subparagraph{Contributions.} In this paper, we make three contributions:
Firstly, we devise two
efficient heuristics for the \ac{mlt}, both of which are primal
feasible local search algorithms inspired by the heuristics
of~\cite{keuper-2015,levinkov-2017} for the \ac{mcp}. We show that for fixed
intra-frame decompositions, the resulting subproblem can be solved
efficiently via bipartite matching.

Secondly, we improve the branch-and-cut algorithm \cite{jug-2016}
by separating tighter cutting planes and by employing our heuristics to
extract feasible solutions.

Finally, we demonstrate the convergence of our algorithms on the problem instances of~\cite{jug-2016}, solving two (previously unsolved) instances to optimality and obtaining accurate solutions orders of magnitude faster.
We demonstrate the scalability of our algorithms on larger (previously inaccessible) instances.

\section{Background and Preliminaries}

Consider a set of $\mathcal{T} = \{0, \dotsc, t_{\text{end}}\}$ consecutive frames of microscopy image data. In moral lineage tracing, we seek to jointly segment the frames into cells and track the latter and their descendants over time. This problem is formulated by \cite{jug-2016} as an \ac{ilp} with binary variables for all edges in an undirected graph as follows.

For each time index $t \in \mathcal{T}$, the node set $V_t$ comprises all cell fragments, \eg superpixels, in frame $t$. Each neighboring pair of cell fragments are connected by an edge. The collection of such edges is denoted by $E_t$. Between consecutive frames $t$ and $t+1$, cell fragments that are sufficiently close to each other are connected by a (temporal) edge. The set of such inter frame edges is denoted by $E_{t,t+1}$. By convention, we set $V_{t_{\text{end}}+1} = E_{t_{\text{end}}+1} = E_{t_{\text{end}},t_{\text{end}}+1} = \emptyset$. The graph $G = (V,E)$ with $V = \bigcup_{t \in \mathcal{T}} V_t $ and $E = \bigcup_{t \in \mathcal{T}} (E_t \cup E_{t,t+1})$ is called \emph{hypothesis graph} and illustrated in~\figref{fig:lineage-forest}. For convenience, we further write $G_t = (V_t,E_t)$ for the subgraph corresponding to frame $t$ and $G_t^+ = (V_t^+,E_t^+)$ with $V_t^+ = V_t \cup V_{t+1}$ and $E_t^+ = E_t \cup E_{t,t+1} \cup E_{t+1}$ for the subgraph corresponding to frames $t$ and $t+1$.

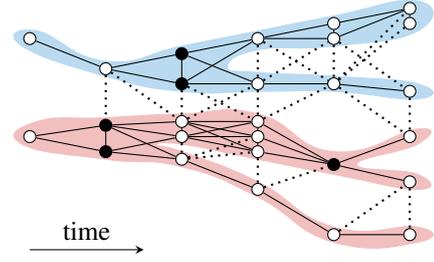
\begin{figure}
  \centering {
\definecolor{firstColor}{RGB}{20,133,204}
\definecolor{secondColor}{RGB}{212,42,42}

\tikzstyle{every node}=[circle, fill=white,
                        inner sep=0pt, minimum width=4pt]

\pgfdeclarelayer{bg}
\pgfsetlayers{bg,main}

\newcommand{\off}{0.3}
\newcommand{\offf}{0.2}

\tikzstyle{vertex}=[circle, draw=black, fill=white, inner sep=0pt, minimum width=1ex]
\tikzstyle{path}=[thick, dotted]

    \begin{tikzpicture}[yscale=-1]
    \draw[draw=firstColor!30, fill=firstColor!30] plot[smooth cycle, tension=0.9] coordinates
    	{(-0.1, 0.1) (1, 0.5) (2, 0.5 + \offf) (3, 0.5 + \offf) (4, 0.5 + \offf) (5.1, 0.6 + \offf) (5.1, 0.4 + \offf) 
    	(4, 0.3 + \offf) (3, 0.25 + \offf) (2.60, 0.28 + \offf/2) (3, 0.15) (4, 0.1) (5.1, -0.1) (5.1, -0.5)
    	(4, -0.3) (3, -0.1) (2, 0.1) (1, 0.3) (0.1, -0.1)};
    
    \draw[draw=secondColor!30, fill=secondColor!30] plot[smooth cycle, tension=0.9] coordinates
        {(-0.1, 1.1 + \off) (1, 1.3 + \off) (2, 1.4 + \off) (3, 1.8 + \off) (4, 2.4 + \off) (5.1, 2.4 + \off) (5.1, 2.2 + \off)
        (4, 2.2 + \off) (3, 1.6 + \off) (2, 1.2 + \off) (1.67, 1.12 + \off) (2, 1.1 + \off) (3, 1.3 + \off) (4, 1.5 + \off) (5.1, 1.7 + \off)
        (5.1, 1.5 + \off) (4.3, 1.37 + \off) (5.1, 1.1 + \off) (5.1, 0.9 + \off) (4, 1.2 + \off) (3, 0.7 + \off) (2, 0.7 + \off)
        (1, 0.75 + \off) (0.1, 0.8 + \off)};
	
    \node[style=vertex] at (0, 0) (a) {};
    \node[style=vertex] at (0, 1 + \off) (b) {};
    
    \node[style=vertex] at (1, 0.4) (c) {};
    \node[style=vertex, fill=black] at (1, 0.85 + \off) (d) {};
    \node[style=vertex, fill=black] at (1, 1.2 + \off) (e) {};
    
    
    \draw[color=black] (a) -- (c);
    \draw[color=black] (b) -- (d);
    \draw[color=black] (b) -- (e);
    
    \draw[style=path, color=black] (c) -- (d);
    \draw[color=black] (d) -- (e);
    
    \node[style=vertex, fill=black] at (2, 0.2) (a) {};
    \node[style=vertex, fill=black] at (2, 0.4 + \offf) (b) {};
    
    
    \node[style=vertex] at (2, 0.8 + \off) (f) {};
    \node[style=vertex] at (2, 1 + \off) (g) {};
    \node[style=vertex] at (2, 1.3 + \off) (h) {};
    
    \draw[color=black] (c) -- (a);
    \draw[color=black] (c) -- (b);
    \draw[style=path, color=black] (c) -- (f);
    \draw[color=black] (b) -- (a);
    \draw[style=path, color=black] (b) -- (f);
    
    \draw[color=black] (d) -- (f);
    \draw[color=black] (d) -- (g);
    \draw[color=black] (f) -- (g);
    \draw[color=black] (d) --  (h);
    \draw[color=black] (e) -- (h);
    \draw[color=black] (e) -- (g);
    \draw[style=path, color=black] (h) -- (g);
        
    \node[style=vertex] at (3, 0) (c) {};
    \node[style=vertex] at (3, 0.4 + \offf) (d) {};
    
    
    \node[style=vertex] at (3, 0.8 + \off) (z) {};
    \node[style=vertex] at (3, 1 + \off) (e) {};
    \node[style=vertex] at (3, 1.2 + \off) (i) {};
    \node[style=vertex] at (3, 1.7 + \off) (j) {};
    
    \draw[color=black] (a) -- (c);
    \draw[color=black] (b) -- (d);
    \draw[style=path, color=black] (c) -- (d);
    \draw[color=black] (a) -- (d);
    \draw[color=black] (b) -- (c);
    \draw[style=path, color=black] (b) -- (z);
    \draw[style=path, color=black] (d) -- (z);
    
    \draw[color=black] (f) -- (z);
    \draw[color=black] (g) -- (z);
    \draw[color=black] (z) -- (e);
    \draw[color=black] (f) -- (e);
    \draw[color=black] (g) -- (e);
    \draw[color=black] (f) -- (i);
    \draw[color=black] (g) -- (i);
    \draw[color=black] (e) -- (i);
    \draw[style=path, color=black] (f) -- (d);
    
    \draw[color=black] (h) -- (j);
    \draw[style=path, color=black] (h) -- (i);
    \draw[style=path, color=black] (h) -- (e);
    \draw[style=path, color=black] (i) -- (j);

    \node[style=vertex] at (4, -0.2) (a) {};
    \node[style=vertex] at (4, 0) (b) {};
    \node[style=vertex] at (4, 0.4 + \offf) (f) {};
    
    \node[style=vertex, fill=black] at (4, 1.37 + \off) (g) {};
    \node[style=vertex] at (4, 2.3 + \off) (k) {};
    
    \draw[color=black] (c) -- (a);
    \draw[color=black] (c) -- (b);
    \draw[color=black] (a) -- (b);
    \draw[style=path, color=black] (c) -- (f);
    \draw[style=path, color=black] (b) -- (f);
    \draw[color=black] (d) -- (f);
    \draw[style=path, color=black] (d) -- (b);
    \draw[style=path, color=black] (z) -- (f);
    
    \draw[color=black] (z) -- (g);
    \draw[color=black] (e) -- (g);
    \draw[color=black] (i) -- (g);
    
    \draw[color=black] (j) -- (k);
    \draw[style=path, color=black] (j) -- (g);
    
    \node[style=vertex] at (5, -0.4) (c) {};
    \node[style=vertex] at (5, -0.2) (d) {};
    \node[style=vertex] at (5, 0.5 + \offf) (e) {};
    
    \node[style=vertex] at (5, 1 + \off) (i) {};
    \node[style=vertex] at (5, 1.6 + \off) (j) {};
    \node[style=vertex] at (5, 2.3 + \off) (l) {};
	
	\draw[color=black] (a) -- (c);
	\draw[color=black] (b) -- (c);
	\draw[color=black] (c) -- (d);
	\draw[color=black] (f) -- (e);
	\draw[style=path, color=black] (f) -- (i);
	\draw[style=path, color=black] (f) -- (c);
	\draw[style=path, color=black] (f) -- (d);
	\draw[style=path, color=black] (b) -- (e);
	\draw[style=path, color=black] (e) -- (i);
	
	\draw[color=black] (g) -- (i);
	\draw[color=black] (g) -- (j);
	
	\draw[color=black] (k) -- (l);
	\draw[style=path, color=black] (j) -- (l);
	\draw[style=path, color=black] (k) -- (j);
	
    \draw[->,>=stealth,color=black] (0, 2.5 + \off) -- (1.5, 2.5 + \off);
    \node[label=center:{\color{black}time}] at (0.75, 2.25 + \off) {};

    \end{tikzpicture}

}
  \caption[\ac{mlt}]{The \emph{\acf{mlt}}\footnotemark[1]: Given a sequence of images decomposed
    into cell fragments (depicted as nodes in the figure), cluster
    fragments into cells in each frame and \emph{simultaneously}
    associate cells into lineage forests over time. Solid edges
    indicate joint cells within images and descendant relations across
    images. Black nodes depict fragments of cells about to divide.}
  \label{fig:lineage-forest}
\end{figure}

For any hypothesis graph $G = (V,E)$, a set $L \subseteq E$ is called a \emph{lineage cut} of $G$ and, correspondingly, the subgraph $(V,E \setminus L)$ is called a \emph{lineage (sub)graph} of $G$ if
\begin{enumerate}[1.,itemsep=-1ex]
\item For every $t \in \mathcal{T}$, the set $E_t \cap L$ is a {multicut}\footnotemark[2] of $G_t$.
\item For every $t \in \mathcal{T}$ and every $\{v,w\} \in E_{t,t+1} \cap L$, the nodes $v$ and $w$ are not path-connected in the graph $(V_t^+, E_t^+ \setminus L)$.
\item For every $t \in \mathcal{T}$ and nodes $v_t, w_t \in V_t$, $v_{t+1}, w_{t+1} \in V_{t+1}$ with $\{v_t,v_{t+1}\}, \{w_t,w_{t+1}\} \in E_{t,t+1} \setminus L$ and such that $v_{t+1}$ and $w_{t+1}$ are path-connected in $(V,E_{t+1} \setminus L)$, the nodes $v_t$ and $w_t$ are path-connected in $(V,E_t \setminus L)$.
\end{enumerate}
For any lineage graph $(V,E \setminus L)$ and every $t \in \mathcal{T}$, the non-empty, maximal connected subgraphs of $(V_t, E_t \setminus L)$ are called \emph{cells} at time index $t$. Furthermore, Jug et al.\ call a lineage cut, respectively lineage graph, \emph{binary} if it additionally satisfies
\begin{enumerate}
\setcounter{enumi}{3}
\item For every $t \in \mathcal{T}$, every cell at time $t$ is connected to at most two distinct cells at time $t+1$.
\end{enumerate}
According to \cite{jug-2016}, any lineage graph well-defines a lineage forest of cells. Moreover, a lineage cut (and thus a lineage graph) can be encoded as a 01-labeling on the edges of the hypothesis graph.

\footnotetext[1]{The figure is a correction of the one displayed in \cite{jug-2016}.}
\footnotetext[2]{A multicut of $G_t = (V_t,E_t)$ is a subset $M \subseteq E_t$ such that for every cycle $C$ in $G_t$ it holds that $\vert M \cap C \rvert \neq 1$, cf.\ \cite{hornakova-2017}.}

\begin{lemma}[\cite{jug-2016}]
For every hypothesis graph $G = (V,E)$ and every $x \in \{0,1\}^E$, the set $x^{-1}(1)$ of edges labeled $1$ is a lineage cut of $G$ iff $x$ satisfies inequalities \eqref{eq:space-cycle} -- \eqref{eq:morality}:
\begin{align}
& \forall t \in \mathcal{T} \forall C \in \text{cycles}(G_t) \forall e \in C : \nonumber \\ 
    & \quad x_e \leq \sum_{e' \in C \setminus \{e\}} x_{e'} \label{eq:space-cycle} \\
& \forall t \in \mathcal{T} \forall \{v,w\} \in E_{t,t+1} \forall P \in vw\text{-paths}(G_t^+): \nonumber \\ 
    & \quad x_{vw} \leq \sum_{e \in P} x_e \label{eq:space-time-cycle}
        \end{align}
    \begin{align}
& \forall t \in \mathcal{T} \forall \{v_t,v_{t+1}\}, \{w_t,w_{t+1}\} \in E_{t,t+1} (\text{with } v_t,w_t \in V_t) \nonumber \\ 
& \forall S \in v_tw_t\text{-cuts}(G_t) \forall P \in v_{t+1}w_{t+1}\text{-paths}(G_{t+1}) : \nonumber \\ 
    & \quad 1 - \sum_{e \in S} (1- x_e) \leq x_{v_tv_{t+1}} + x_{w_tw_{t+1}} + \sum_{e \in P} x_e \label{eq:morality}
\end{align}
\end{lemma}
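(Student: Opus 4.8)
The plan is to fix $x \in \{0,1\}^E$, write $L = x^{-1}(1)$, and prove the biconditional by decoupling it into three independent equivalences, one per defining property of a lineage cut. Since ``$L$ is a lineage cut'' is by definition the conjunction of conditions~1--3, and the inequality system is the conjunction of the three families \eqref{eq:space-cycle}, \eqref{eq:space-time-cycle} and \eqref{eq:morality}, it suffices to show that condition~$i$ is equivalent to family~$i$ for $i=1,2,3$. The single combinatorial tool I would isolate first is the reachability form of Menger's theorem: for any graph $H$, nodes $u,u'$ and edge set $L$, the nodes $u$ and $u'$ are path-connected in $(V(H), E(H)\setminus L)$ iff some $uu'$-path in $H$ avoids $L$, and they are \emph{not} path-connected iff some $uu'$-cut of $H$ is contained in $L$. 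Phrased with $x$: connectivity is equivalent to the existence of a path $P$ with $\sum_{e\in P} x_e = 0$, and disconnection to the existence of a cut $S$ with $\sum_{e\in S}(1-x_e)=0$.

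For condition~1 versus \eqref{eq:space-cycle}, I would invoke the standard characterization of multicuts (\cf \cite{hornakova-2017}): with $M = E_t \cap L$ and the restriction of $x$ to $E_t$ its indicator, the cycle inequality $x_e \le \sum_{e'\in C\setminus\{e\}} x_{e'}$ says precisely that whenever an edge $e$ of a cycle $C$ lies in $M$ then $C$ contains a second edge of $M$, i.e.\ $\lvert M\cap C\rvert \neq 1$. Both implications are immediate from the binary values of $x$, so condition~1 holds for all $t$ iff \eqref{eq:space-cycle} holds.

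For condition~2 versus \eqref{eq:space-time-cycle}, I would apply the reachability lemma inside $G_t^+$ to the endpoints of a temporal edge $\{v,w\}\in E_{t,t+1}$. If $x_{vw}=0$ both the inequality and the condition are vacuous; if $x_{vw}=1$, i.e.\ $\{v,w\}\in E_{t,t+1}\cap L$, then condition~2 demands that $v,w$ be disconnected in $(V_t^+,E_t^+\setminus L)$, which by the lemma is exactly the statement that every $vw$-path $P$ in $G_t^+$ satisfies $\sum_{e\in P} x_e \ge 1$. This is precisely the content of \eqref{eq:space-time-cycle} at that edge, giving the equivalence in both directions.

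The morality constraint, condition~3 versus \eqref{eq:morality}, is the step I expect to be the main obstacle, because it couples a cut at time $t$ with a path at time $t+1$ and so uses the reachability duality in both directions at once. Here I would argue via integrality. Fix a quadruple $v_t,w_t,v_{t+1},w_{t+1}$ with $\{v_t,v_{t+1}\},\{w_t,w_{t+1}\}\in E_{t,t+1}$, and a choice of $v_tw_t$-cut $S$ and $v_{t+1}w_{t+1}$-path $P$. Since $x$ is binary, the left-hand side $1-\sum_{e\in S}(1-x_e)$ is at most $1$ and the right-hand side $x_{v_tv_{t+1}}+x_{w_tw_{t+1}}+\sum_{e\in P}x_e$ is at least $0$, so a member of \eqref{eq:morality} can fail only when the left side equals $1$ and the right side equals $0$; that is, when $S\subseteq L$, both temporal edges avoid $L$, and $P$ avoids $L$. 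By the reachability lemma, $S\subseteq L$ says $v_t,w_t$ are disconnected in $(V,E_t\setminus L)$, while $P$ avoiding $L$ says $v_{t+1},w_{t+1}$ are connected in $(V,E_{t+1}\setminus L)$. Hence some inequality in \eqref{eq:morality} is violated iff there is a quadruple with both temporal edges in $E_{t,t+1}\setminus L$, with $v_{t+1},w_{t+1}$ connected at time $t+1$ and $v_t,w_t$ disconnected at time $t$ --- which is exactly a violation of condition~3. The only care needed is to match the universal quantifiers over $S$ and $P$ in \eqref{eq:morality} with the existential quantifiers hidden in ``connected''/``disconnected'', which the two directions of the reachability lemma supply. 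Combining the three equivalences yields the lemma.
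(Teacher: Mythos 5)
Your proof is correct, but note that there is nothing in this paper to compare it against: the lemma is imported verbatim from Jug et al.~\cite{jug-2016} (the paper proves its own results, Lemmas~\ref{lem:independent-mcb}--\ref{lem:birth-termination}, but states this one as a citation without proof). Judged on its own merits, your condition-by-condition decoupling is legitimate, because each of your three equivalences is established without reference to the other two conditions, so the conjunction-to-conjunction reduction goes through. The 0--1 case analysis for \eqref{eq:morality} --- an inequality fails iff $S \subseteq L$, both temporal edges avoid $L$, and $P$ avoids $L$ --- is the standard cut--path duality argument, and your remark about matching the universal quantifiers over $S$ and $P$ against the existential quantifiers hidden in ``connected''/``disconnected'' addresses the one point where such proofs typically go wrong; it works here because $S$ lives in $G_t$ and $P$ in $G_{t+1}$, so the two witnesses can be chosen independently. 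Two minor quibbles. First, what you use is not Menger's theorem (which equates minimum cut size with the maximum number of disjoint paths) but the elementary fact that, if $u$ and $u'$ are disconnected in $(V, E \setminus L)$, the set of edges leaving the component of $u$ is a $uu'$-cut contained in $L$; citing Menger is harmless but overstates what is needed. Second, for completeness you should flag the degenerate case $v_{t+1} = w_{t+1}$ with $v_t \neq w_t$: there \eqref{eq:morality} must still forbid two uncut temporal edges into a single cell from disconnected parents, which it does only if ``$v_{t+1}w_{t+1}$-paths'' is read to include the trivial empty path. Your reachability lemma covers this case implicitly (a node is trivially connected to itself), but making the convention explicit would close the last gap.
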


Jug et al.\ refer to \eqref{eq:space-cycle} as \emph{space cycle}, to \eqref{eq:space-time-cycle} as \emph{space-time cycle} and to \eqref{eq:morality} as \emph{morality} constraints. 
We denote by $X'_G$ the set of all $x \in \{0,1\}^E$ that satisfy \eqref{eq:space-cycle} -- \eqref{eq:morality}. For the formulation of the additional \emph{bifurcation} constraints, which guarantee that the associated lineage cut is binary, we refer to \cite[Eq.\ 4]{jug-2016}. The set $X_G$ collects all $x \in X'_G$ that also satisfy the bifurcation constraints.

Given cut costs $c : E \to \R$ on the edges as well as \emph{birth} and \emph{termination} costs $c^+, c^- : V \to \R_0^+$ on the vertices of the hypothesis graph, \cite{jug-2016} defines the following \emph{moral lineage tracing problem} (\ac{mlt})
\begin{align}
\min_{x,x^+,x^-} \quad & \sum_{e \in E} c_ex_e + \sum_{v \in V} c_v^+ x_v^+ + \sum_{v \in V} c_v^- x_v^- \label{eq:mlt-objective}\\
\text{subject~to} \qquad & x \in X_G, \quad x^+, x^- \in \{0,1\}^V, \\
& \forall t \in \mathcal{T} \forall v \in V_{t+1} \forall S \in V_tv\text{-cuts}(G_t^+): \nonumber \\
& \qquad 1 - x_v^+ \leq \sum_{e \in S} (1 - x_e), \label{eq:birth} \\
& \forall t \in \mathcal{T} \forall v \in V_t \forall S \in vV_{t+1}\text{-cuts}(G_t^+): \nonumber \\
& \qquad 1 - x_v^- \leq \sum_{e \in S} (1 - x_e). \label{eq:termination}
\end{align}

The inequalities \eqref{eq:birth} and \eqref{eq:termination} are called \emph{birth} and \emph{termination} constraints, respectively.

\section{Local Search Algorithms}

In this section, we introduce two local search heuristics for the
\ac{mlt}. The first builds a lineage bottom-up in a greedy fashion,
while the second applies Kernighan-Lin~\cite{kernighan-1970} updates
to the intra-frame components. The latter requires repeatedly
optimizing a branching problem, given a fixed intra-frame
decomposition, for which we discuss an efficient combinatorial
minimizer.

Both algorithms maintain a decomposition of the graph
$(\nodes, \bigcup_{t \in \mathcal{T}} \edges_{t})$, \ie the components
within each frame $G_t$ that represent the cells. We denote the set of
all cells with $\components$. For each set of edges going from a
component $a \in \components$ at time point $t$ to a component $b$ at
$t+1$, we associate an arc $ab \in \arcs$. This gives a directed graph
$\partitionGraph=(\components, \arcs)$, as illustrated in
\figref{fig:partition-graph}. We write $\vertices_a$ for the set of
vertices $v$ in component $a \in \components$ and $\edges_{ab}$ for
the set of edges represented by arc $ab \in \arcs$. They further
maintain a selection of the arcs $\arcs(\branchvar{})$, where
$y \in \{0,1\}^\mathcal{A}$, to represent which temporal edges are
cut.

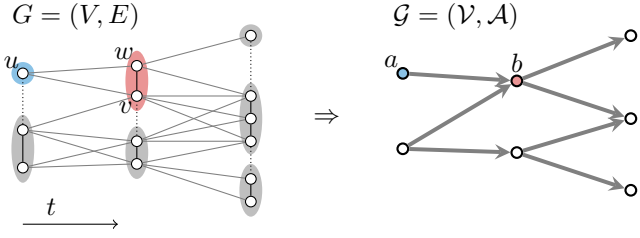
\begin{figure}
  \centering {

\definecolor{firstColor}{RGB}{20,133,204}
\definecolor{secondColor}{RGB}{212,42,42}

\tikzstyle{every node}=[circle, fill=white,
                        inner sep=0pt, minimum width=4pt]

\pgfdeclarelayer{bg}
\pgfsetlayers{bg,main}

\begin{tikzpicture}[thin]

  \node[rectangle, fill=none] at (0.7, 2.75) {$G=(V,E)$};
  \draw[->] (0,0) -- node[above, yshift=0.1cm, xshift=-.25cm] {$t$} (1.25, 0) ;
  
  \node[fill=none] at (4, 1.4) {$\Rightarrow$};
  \node[rectangle, fill=none] at (5.7, 2.75) {$\partitionGraph=(\components, \arcs)$};

  \node[draw] (v00) at (0, 0.75) {};
  \node[draw] (v01) at (0, 1.25) {};
  \node[draw,label={above left}:$u$] (v02) at (0, 2.) {};

  \node[draw] (v10) at (1.5, 0.8) {};
  \node[draw] (v11) at (1.5, 1.1) {};
  \node[draw,label={below left}:$v$] (v12) at (1.5, 1.7) {};
  \node[draw,label={above left}:$w$] (v13) at (1.5, 2.1) {};
  
  \node[draw] (v20) at (3, 2.5) {};
  \node[draw] (v21) at (3, 1.7) {};
  \node[draw] (v22) at (3, 1.4) {};
  \node[draw] (v23) at (3, 1.1) {};

  \node[draw] (v24) at (3, 0.6) {};
  \node[draw] (v25) at (3, 0.3) {};

  \draw (v00) -- (v01);
  \draw (v10) -- (v11);
  \draw (v12) -- (v13);
  \draw (v21) -- (v22);
  \draw (v22) -- (v23);
  \draw (v24) -- (v25);

  \begin{scope}[densely dotted]
    \draw (v01) -- (v02);
    \draw (v11) -- (v12);
    \draw (v20) -- (v21);
    \draw (v23) -- (v24);
  \end{scope}

    \begin{scope}[gray, thin]
      \draw (v00) -- (v10);
      \draw (v00) -- (v11);
      \draw (v01) -- (v10);
      \draw (v01) -- (v11);

      \draw (v01) -- (v12);
      \draw (v02) -- (v12);
      \draw (v02) -- (v13);

      \draw (v10) -- (v25);
      \draw (v10) -- (v24);
      \draw (v10) -- (v23);

      \draw (v11) -- (v23);
      \draw (v11) -- (v22);
      \draw (v11) -- (v21);
      
      \draw (v12) -- (v22);
      \draw (v12) -- (v21);
      \draw (v12) -- (v23);

      \draw (v13) -- (v20);
      \draw (v13) -- (v21);
    \end{scope}
  
    \begin{pgfonlayer}{bg}
      \fill[gray!50] ($(v00)!0.5!(v01)$) ellipse (0.15 and 0.45) {};
      \fill[firstColor!50] (v02) ellipse (0.15 and 0.15) {};
      
      \fill[gray!50] ($(v10)!0.5!(v11)$) ellipse (0.15 and 0.35) {};
      \fill[secondColor!50] ($(v12)!0.5!(v13)$) ellipse (0.15 and 0.4) {};

      \fill[gray!50] ($(v24)!0.5!(v25)$) ellipse (0.15 and 0.35) {};
      \fill[gray!50] ($(v21)!0.5!(v23)$) ellipse (0.15 and 0.475) {};
      
      \fill[gray!50] (v20) ellipse (0.15 and 0.15) {};
    \end{pgfonlayer}

    \begin{scope}[thick, xshift=5cm]
      \node[draw] (a00) at ($(v00)!0.5!(v01) + (5,0)$) {};
      \node[draw,label={above left}:$a$,fill=firstColor!50] (a01) at (0, 2.) {};

      \node[draw] (a10) at ($(v10)!0.5!(v11) + (5,0)$) {};
      \node[draw,label={above}:$b$,fill=secondColor!50] (a11) at ($(v12)!0.5!(v13) + (5,0)$) {};
      
      \node[draw] (a20) at (3, 2.5) {};
      \node[draw] (a21) at ($(v24)!0.5!(v25) + (5,0)$) {};
      \node[draw] (a22) at ($(v21)!0.5!(v23) + (5,0)$) {};
    \end{scope}

    \begin{scope}[gray, line width=1.5pt, >=stealth]
      \draw[->] (a00) -- (a10);
      \draw[->] (a00) -- (a11);
      \draw[->] (a01) -- (a11);
      
      \draw[->] (a11) -- (a20);
      \draw[->] (a11) -- (a22);
      \draw[->] (a10) -- (a22);
      \draw[->] (a10) -- (a21);
    \end{scope}
\end{tikzpicture}
}

  \caption{For a fixed decomposition of the frames (depicted with
    black solid/dashed cut edges), we associate a directed graph
    $\partitionGraph$ over the components $\components$. The arcs
    $\arcs$ bundle all edges going from any node of one cell to any
    node of another cell in the successive frame. For example, the
    components $\vertices_a=\{ u\}$ and $\vertices_b=\{ v,w \}$ are
    linked by the arc $ab$ which corresponds to the set of edges
    $E_{ab} = \{ uv, uw\}$.  Determining the optimal state of the
    temporal edges (grey) given a decomposition into cells boils down
    to finding an optimal branching in $\partitionGraph$.}
  \label{fig:partition-graph}
\end{figure}

\begin{algorithm}
\caption{Greedy Lineage Agglomeration (GLA)}
\begin{algorithmic}
\While{ $\mathrm{progress}$ }
   \State $(a,b) \gets \argmin_{ab \in \contractedEdges \cup \arcs} \Delta_{ab}^{\mathrm{transform}}$
   \If {$\Delta_{ab}^{\mathrm{transform}} < 0$}
       \State $\mathrm{applyTransform}(\mathcal{G}, a,b)$ \Comment \parbox[t]{.35\linewidth}{\raggedright updates partitions of $\partitionGraph$ and selects arcs $\arcs(\branchvar{})$.}
       \vspace{-1.8\baselineskip}
   \Else 
       \State \textbf{break}
   \EndIf
\EndWhile
\\
\Return $\mathrm{edgeLabels}(\mathcal{G})$ \Comment \parbox[t]{.4\linewidth}{\raggedright cut-edge labeling $x^*$ from $\components$ and $\arcs(\branchvar{})$.}
\end{algorithmic}
\label{alg:gla}
\end{algorithm}

\subsection{Greedy Lineage Agglomeration (GLA)}
\label{sec:gla}

The first algorithm takes an \ac{mlt} instance and constructs a
feasible lineage in a bottom-up fashion.  It is described in
Alg.~\ref{alg:gla} and follows a similar scheme as the
GAEC~\cite{keuper-2015} heuristic for the \ac{mcp} in the sense that
it always takes the currently best possible transformation, starting
from $\components = \vertices$. It applies three different types of
transformations: 1) a $\mathrm{merge}$ contracts all edges between two
components of the same time point $t$, combining them into one single
component.  2) $\mathrm{setParent}$ selects an arc $ab \in \arcs$ and
thereby sets $a$ of $\components_t$ as the (current) parent of $b \in
\components_{t+1}$, while 3) $\mathrm{changeParent}$ de-selects such
(active) arc $ab$ and instead selects an alternative $cb$. While final
components $\components$ determine intra-frame cuts, the final
selection of arcs then determines which temporal edges are cut edges
($x_e=1$). Unlike GAEC, transformations concerning the temporal edges
are reversible due to $\mathrm{changeParent}$. All allowed
transformations, $\mathrm{merge}$, $\mathrm{setParent}$ and
$\mathrm{changeParent}$, are depicted in \figref{fig:gla-moves}.
The change in objective \eqref{eq:mlt-objective} caused by a
particular transformation involving $a$ and $b$ is denoted with
$\Delta_{ab}^{\mathrm{transform}}$. In order to determine the cost or
reward of a particular transformation, we have to examine not only the
edge between the involved components $a$ and $b$, but also whether
they have an associated parent or child cell already. For a
$\mathrm{merge}$, we have to consider arcs going to children or
parents of either component, since they would be combined into an
active arc and therefore change their state and affect the
objective. The detailed, incremental calculation of these
transformation costs $\Delta_{ab}^{\mathrm{transform}}$ can be found
in the appendix. We maintain feasibility at all times: two components
with different parents cannot be merged (it would violate morality
constraints \eqref{eq:morality}), and similarly, a merge of two
partitions with a total of more than two active outgoing arcs is not
considered (as it would violate bifurcation constraints).  The
algorithm stops as soon as no available transformation decreases the
objective.

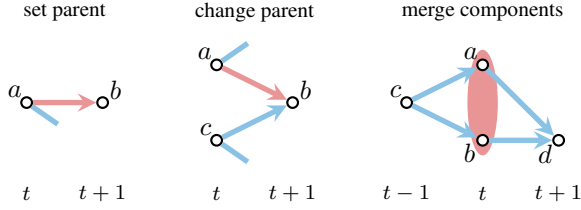
\begin{figure}
  \centering {
\definecolor{firstColor}{RGB}{20,133,204}
\definecolor{secondColor}{RGB}{212,42,42}

\tikzstyle{every node}=[circle, fill=white,
                        inner sep=0pt, minimum width=4pt]

\pgfdeclarelayer{bg}
\pgfsetlayers{bg,main}

\begin{tikzpicture}[thick]

  \node[draw,label={above left}:$a$] (a00) at (0, 0) {};
  \node[draw,label={above right}:$b$] (a01) at (1, 0.) {};

  \node[draw,label={above left}:$a$] (a10) at (2.5, 0.5) {};
  \node[draw,label={above right}:$b$] (a11) at (3.5, 0.) {};
  \node[draw,label={above left}:$c$] (a12) at (2.5, -0.5) {};

  \node[draw,label={above left}:$c$] (a20) at (5.0, 0.) {};
  \node[draw,label={above left}:$a$] (a21) at (6, 0.5) {};
  \node[draw,label={below left}:$b$] (a22) at (6, -0.5) {};
  \node[draw,label={below left}:$d$] (a23) at (7, -0.5) {};
  
  \begin{scope}[gray, line width=2pt, >=stealth]
    \draw[->, secondColor!50] (a00) -- (a01);
    \draw[firstColor!50] (a00) -- ++(-35:0.5cm);

    \draw[->, secondColor!50] (a10) -- (a11);
    \draw[->, firstColor!50] (a12) -- (a11);
    \draw[firstColor!50] (a12) -- ++(-35:0.5cm);
    \draw[firstColor!50] (a10) -- ++(35:0.5cm);
    
    \draw[->, firstColor!50] (a20) -- (a21);
    \draw[->, firstColor!50] (a20) -- (a22);
    \draw[->, firstColor!50] (a21) -- (a23);
    \draw[->, firstColor!50] (a22) -- (a23);
  \end{scope}
  
    \begin{pgfonlayer}{bg}
      \fill[secondColor!50] ($(a21)!0.5!(a22)$) ellipse (0.2 and 0.7) {};
    \end{pgfonlayer}
  
  \node[rectangle, black, fill=none] at (0.5, 1.2) {\footnotesize set parent};
  \node[rectangle, black, fill=none] at (3, 1.2) {\footnotesize change parent};
  \node[rectangle, black, fill=none] at (6, 1.2) {\footnotesize merge components};

  \node[black, fill=none] at (0, -1.2) {\footnotesize $t$};
  \node[black, fill=none] at (1, -1.2) {\footnotesize $t+1$};

  \node[black, fill=none] at (2.5, -1.2) {\footnotesize $t$};
  \node[black, fill=none] at (3.5, -1.2) {\footnotesize $t+1$};

  \node[black, fill=none] at (5, -1.2) {\footnotesize $t-1$};
  \node[black, fill=none] at (6, -1.2) {\footnotesize $t$};
  \node[black, fill=none] at (7, -1.2) {\footnotesize $t+1$};
\end{tikzpicture}
}
  \caption{The three transformations of GLA: set $a$ as parent of $b$
    (\textbf{left}), change the parent of $b$ from $c$ to $a$
    (\textbf{middle}) or merge two components $a$ and $b$ into one
    (\textbf{right}). The major arc along which the transformation
    occurs is depicted in red, while other arcs that affect the
    transformations cost are blue. When changing a parent,
    for example, the presence of other active arcs originating from
    $a$ and $c$ determine whether termination costs have to be
    paid. For a merge, we have to consider arcs to parents or
    children, which would be joined with an active arc and therefore
    change their state. }
  \label{fig:gla-moves}
\end{figure}

\vspace{-2ex}

\paragraph{Implementation.} We use a priority queue to efficiently
retrieve the currently best transformation. After applying it, each
\emph{affected} transformation is re-calculated and inserted into the
queue. We invalidate previous editions of transformations indirectly
by keeping track of the most recent version for all
$\contractedEdges$. For each component, we actively maintain the
number of children and its parent to represent the selected arcs
$\arcs(\branchvar{})$.

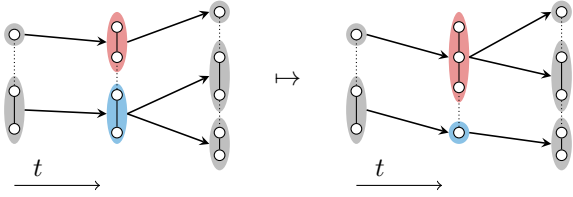
\begin{figure}
  \centering {

\definecolor{firstColor}{RGB}{20,133,204}
\definecolor{secondColor}{RGB}{212,42,42}

\tikzstyle{every node}=[circle, fill=white,
                        inner sep=0pt, minimum width=4pt]

\pgfdeclarelayer{bg}
\pgfsetlayers{bg,main}

\begin{tikzpicture}[xscale=0.9,thin]

  \draw[->] (0,0) -- node[above, yshift=0.1cm, xshift=-.25cm] {$t$} (1.25, 0) ;
 
  \node[draw] (v00) at (0, 0.75) {};
  \node[draw] (v01) at (0, 1.25) {};
  \node[draw] (v02) at (0, 2.) {};

  \node[draw] (v10) at (1.5, 0.7) {};
  \node[draw] (v11) at (1.5, 1.2) {};
  \node[draw] (v12) at (1.5, 1.7) {};
  \node[draw] (v13) at (1.5, 2.1) {};
  
  \node[draw] (v20) at (3, 2.3) {};
  \node[draw] (v21) at (3, 1.7) {};
  \node[draw] (v23) at (3, 1.2) {};

  \node[draw] (v24) at (3, 0.7) {};
  \node[draw] (v25) at (3, 0.4) {};

  \draw (v00) -- (v01);
  \draw (v10) -- (v11);
  \draw (v12) -- (v13);
  \draw (v21) -- (v23);
  \draw (v24) -- (v25);

  \begin{scope}[densely dotted]
    \draw (v01) -- (v02);
    \draw (v11) -- (v12);
    \draw (v20) -- (v21);
    \draw (v23) -- (v24);
  \end{scope}

    \begin{pgfonlayer}{bg}
      \fill[gray!50] ($(v00)!0.5!(v01)$) ellipse (0.15 and 0.45) {};
      \fill[gray!50] (v02) ellipse (0.15 and 0.15) {};
      
      \fill[firstColor!50] ($(v10)!0.5!(v11)$) ellipse (0.15 and 0.4) {};
      \fill[secondColor!50] ($(v12)!0.5!(v13)$) ellipse (0.15 and 0.4) {};

      \fill[gray!50] ($(v24)!0.5!(v25)$) ellipse (0.15 and 0.35) {};
      \fill[gray!50] ($(v21)!0.5!(v23)$) ellipse (0.15 and 0.475) {};
      
      \fill[gray!50] (v20) ellipse (0.15 and 0.15) {};
    \end{pgfonlayer}
    
     \begin{scope}[black, line width=0.6pt, >=stealth]
      \draw[->] ($(v00)!0.5!(v01) + (0.15,0)$) -- ($(v10)!0.5!(v11) - (0.15,0)$);
      \draw[->] ($(v02) + (0.15,0)$) -- ($(v12)!0.5!(v13) - (0.15,0)$);
      
      \draw[->] ($(v10)!0.5!(v11) + (0.15,0)$) -- ($(v24)!0.5!(v25) - (0.15,0)$);
      \draw[->] ($(v10)!0.5!(v11) + (0.15,0)$) -- ($(v21)!0.5!(v23) - (0.15,0)$);
      \draw[->] ($(v12)!0.5!(v13) + (0.15,0)$) -- ($(v20) - (0.15,0)$);
    \end{scope}
    
    \draw[->] (5,0) -- node[above, yshift=0.1cm, xshift=-.25cm] {$t$} (6.25, 0) ;
    \node[fill=none] at (4, 1.4) {$\mapsto$};
    
  \node[draw] (v00) at (5, 0.75) {};
  \node[draw] (v01) at (5, 1.25) {};
  \node[draw] (v02) at (5, 2.) {};

  \node[draw] (v10) at (6.5, 0.7) {};
  \node[draw] (v11) at (6.5, 1.3) {};
  \node[draw] (v12) at (6.5, 1.7) {};
  \node[draw] (v13) at (6.5, 2.1) {};
  
  \node[draw] (v20) at (8, 2.3) {};
  \node[draw] (v21) at (8, 1.7) {};
  \node[draw] (v23) at (8, 1.2) {};

  \node[draw] (v24) at (8, 0.7) {};
  \node[draw] (v25) at (8, 0.4) {};

  \draw (v00) -- (v01);
  \draw (v11) -- (v12);
  \draw (v12) -- (v13);
  \draw (v21) -- (v23);
  \draw (v24) -- (v25);

  \begin{scope}[densely dotted]
    \draw (v01) -- (v02);
    \draw (v10) -- (v11);
    \draw (v20) -- (v21);
    \draw (v23) -- (v24);
  \end{scope}

    \begin{pgfonlayer}{bg}
      \fill[gray!50] ($(v00)!0.5!(v01)$) ellipse (0.15 and 0.45) {};
      \fill[gray!50] (v02) ellipse (0.15 and 0.15) {};
      
      \fill[firstColor!50] ($(v10)$) ellipse (0.15 and 0.15) {};
      \fill[secondColor!50] ($(v11)!0.5!(v13)$) ellipse (0.15 and 0.6) {};

      \fill[gray!50] ($(v24)!0.5!(v25)$) ellipse (0.15 and 0.35) {};
      \fill[gray!50] ($(v21)!0.5!(v23)$) ellipse (0.15 and 0.475) {};
      
      \fill[gray!50] (v20) ellipse (0.15 and 0.15) {};
    \end{pgfonlayer}
    
     \begin{scope}[black, line width=0.6pt, >=stealth]
      \draw[->] ($(v00)!0.5!(v01) + (0.15,0)$) -- ($(v10) - (0.15,0)$);
      \draw[->] ($(v02) + (0.15,0)$) -- ($(v11)!0.5!(v13) - (0.15,0)$);
      
      \draw[->] ($(v10) + (0.15,0)$) -- ($(v24)!0.5!(v25) - (0.15,0)$);
      \draw[->] ($(v11)!0.5!(v13) + (0.15,0)$) -- ($(v21)!0.5!(v23) - (0.15,0)$);
      \draw[->] ($(v11)!0.5!(v13) + (0.15,0)$) -- ($(v20) - (0.15,0)$);
    \end{scope}
    
\end{tikzpicture}
}

  \caption{Depicted above is a transformation carried out by the KLB
    algorithm.  One node in the middle image is moved from the blue
    component to the red component.  Consequently, the optimal
    branching changes.}
  \label{fig:KLB-step}
\end{figure}

\begin{algorithm}
\caption{KL with Optimal Branchings (KLB)}
\begin{algorithmic}
\While{ progress }
   \For{ $a, b \in \components $ }
       \If{ $\not\exists uv \in \edges_t : u \in \vertices_a \wedge v \in \vertices_b $}
       \State \textbf{continue}
       \EndIf
       \State $\mathrm{improveBipartition}(\partitionGraph, a, b)$ 
       \Comment \parbox[t]{.24\linewidth}{\raggedright move nodes across border or merge.}
       \vspace{-1.3\baselineskip}
   \EndFor
   \For{ $a \in \components $ }
       \State $\mathrm{splitPartition}(\partitionGraph, a)$ \Comment split partition.
   \EndFor
\EndWhile
\\
\Return $\mathrm{cutEdgeLabels}(\partitionGraph)$ \Comment \parbox[t]{.4\linewidth}{\raggedright cut-edge labeling $x^*$ from $\components$ and $\arcs(\branchvar{}^*)$.}
\end{algorithmic}
\label{alg:klb}
\end{algorithm}

\subsection{Kernighan-Lin with Optimal Branchings (KLB)}
\label{sec:klb}
Algorithm~\ref{alg:klb} takes an \ac{mlt} instance and an initial
decomposition, \eg the result of GLA, and attempts to decrease the
objective function \eqref{eq:mlt-objective} in each step by changing
the intra-frame partitions in a
Kernighan-Lin-fashion~\cite{kernighan-1970}, an example is illustrated in Fig. \ref{fig:KLB-step}. Like the algorithm
proposed by~\cite{keuper-2015} for the related \ac{mcp}, it explores
three different local transformations to decrease the objective
function maximally: \textbf{a)} apply a sequence of $k$ node switches
between two adjacent components $a$ and $b$, \textbf{b)} a complete
merge of two components, and \textbf{c)} splitting a component into
two. Transforms that do not decrease the objective will be
discarded. In contrast to the setting of a \ac{mcp}, judging the
effect of such local modifications on the objective is more difficult,
since it requires according changes to the temporal cut-edges. This
can be seen when reordering the terms of the \ac{mlt} objective
$f_{\mathrm{MLTP}}$ \eqref{eq:mlt-objective}:
\begin{equation}
  \label{eq:composed-objective}
  f_{\mathrm{MLTP}}(x) = \sum_{\mathclap{e \in \bigcup_{t \in \mathcal{T}} \edges_{t,t+1}}} c_e \enspace + \quad \sum_{\mathclap{e \in \bigcup_{t \in \mathcal{T}} \edges_t}} c_e x_e + f_{\mathrm{MCBP}}(x)  \enspace ,
\end{equation}
where we identify the first sum to be an instance-dependent constant,
the second sum is the contribution from intra-frame edges (\ie the
decomposition into cells) and the last term, summarized with
$f_{\mathrm{MCBP}}$ is the sum over all inter-frame edges as well as
birth and termination costs. Given a particular KLB-transformation,
the change to the intra-frame part is straight-forward to calculate,
while the change of the inter-frame part involves solving $\min
f_{\mathrm{MCBP}}(.)$ anew.
This sub-problem turns out to be a variant of a \ac{mcb}, which we
discuss next. Afterwards, we describe a combinatorial optimizer for
this \ac{mcb}, and finally provide additional details on its usage
within KLB.

\paragraph{Minimum Cost Branching on $\partitionGraph$.}
Given a fixed decomposition into cells $\components$, \ie is a fixed
value for all intra-frame cut-edge variables $x_e$, we can reduce the
remaining (partial) \ac{mlt} to the following \ac{mcb} over
$\partitionGraph=(\components, \arcs)$:
\begin{align}
  \label{eq:mcb-objective}
  \min_{\branchvar{}, \branchvar{}^-, \branchvar{}^+} \quad & \sum_{ab\in \arcs} c_{ab} \branchvar{ab} + \sum_{a \in \components}c_a^+\branchvar{a}^+ + \sum_{a \in \components} c_a^- \branchvar{a}^-\\
  \mathrm{subject~to} \quad & \forall a \in \components \enspace :\enspace (1 - \branchvar{a}^+) = \sum_{\mathclap{b \in\neighbourIn{a}}} \branchvar{ba} \label{eq:incoming}\\
  & \forall a \in \components \enspace : \enspace (1 - \branchvar{a}^-) \leq \sum_{\mathclap{b \in \neighbourOut{a}}} \branchvar{ab} \leq 2 \label{eq:outgoing}\\
  & \branchvar{} \in \{ 0, 1\}^{\arcs}, \quad \branchvar{}^-,
  \branchvar{}^+ \in \{ 0, 1\}^{\components } \enspace ,
\end{align}
where $y,y^-,y^+$ are substitutes for those original cut variables
$x,x^+,x^-$ that are bundled within an arc or component in
$\partitionGraph$. The objective \eqref{eq:mcb-objective} is exactly
$f_{\mathrm{MCBP}}$ of \eqref{eq:composed-objective}. Each
$\branchvar{ab}$ indicates whether arc $ab$ is active
($\branchvar{ab}=1$) or not ($\branchvar{ab}=0$).  The equality
constraint \eqref{eq:incoming} ensures that at most one incoming arc
is selected (preventing a violation of morality) and, if none is
chosen, the birth indicator $ \branchvar{a}^+$ is active. In the same
sense, \eqref{eq:outgoing} enforces the penalty for termination if
necessary, and its upper bound limits the number of children to 2,
which enforces the bifurcation constraint. Since $\partitionGraph$ is
acyclic by construction, we do not require cycle elimination
constraints that are typically present in general formulations of
\acp{mcb}. Observing that $\forall e \in \edges_{ab} : 1 -
\branchvar{ab}=x_{e}$, \ie all edges in an arc need to have the same
state to satisfy space-time constraints, we derive the weights
$c_{ab}= - \sum_{e \in \edges_{ab}} c_e$. With a similar reasoning,
all vertices of a component $a$ need to be in the same
birth/termination state, $\forall v \in \vertices_{a} :
y_a^{+}=x_v^{+}$, hence we derive $c_{a}^+=\sum_{v \in \vertices_a}
c_v^+$ (and analogous for termination costs $c_{a}^-$). The derivation
is found in the supplement.

\paragraph{Matching-Based Algorithm for the \ac{mcb}.} We now show that
the \ac{mcb}~\eqref{eq:mcb-objective}-\eqref{eq:outgoing} can be
solved efficiently by reducing it to a set of \acp{mcbmp}.

To this end, observe that the graph $\partitionGraph =
  (\components, \arcs)$ is acyclic by construction, cf.\ Fig.\ \ref{fig:partition-graph}. Denote by $\partitionGraph_{t,t+1} = (\components_t \cup \components_{t+1}, \arcs_{t,t+1})$ the subgraph of $\partitionGraph$ that corresponds to the consecutive frames $t$ and $t+1$.

\begin{lemma} \label{lem:independent-mcb} For every $\partitionGraph =
  (\components, \arcs)$ arising from a fixed intra-frame decomposition, the solution of the \ac{mcb} on $\partitionGraph$ can be found by solving the \ac{mcb} for all
  $\partitionGraph_{t,t+1}$ individually.
\end{lemma}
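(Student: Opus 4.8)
The plan is to show that the branching problem \eqref{eq:mcb-objective}--\eqref{eq:outgoing} over $\partitionGraph$ is \emph{completely separable} along the time axis: its variables partition into groups indexed by consecutive frame pairs, no constraint couples two different groups, and the objective is a sum over the groups plus a constant. Once this is established, minimizing over $\partitionGraph$ reduces to minimizing over each $\partitionGraph_{t,t+1}$ independently, because the feasible set is then a Cartesian product and the objective is separable over the factors.

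First I would record the structural fact that drives everything: since arcs of $\partitionGraph$ connect only components of consecutive frames, the arc set is a disjoint union $\arcs = \bigcup_{t} \arcs_{t,t+1}$, and every arc variable $\branchvar{ab}$ belongs to exactly one $\arcs_{t,t+1}$. Next I assign the remaining variables. For a component $a \in \components_{t+1}$ the incoming neighbourhood satisfies $\neighbourIn{a} \subseteq \components_t$, so the birth constraint \eqref{eq:incoming} for $a$ involves only arcs of $\arcs_{t,t+1}$; dually, for $a \in \components_t$ we have $\neighbourOut{a} \subseteq \components_{t+1}$, so the termination constraint \eqref{eq:outgoing} for $a$ involves only arcs of $\arcs_{t,t+1}$. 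I therefore place $\branchvar{a}^+$ together with its birth constraint into the group of $\partitionGraph_{t,t+1}$ whenever $a \in \components_{t+1}$, and $\branchvar{a}^-$ together with its termination constraint into the group of $\partitionGraph_{t,t+1}$ whenever $a \in \components_t$. The two boundary cases are trivial: a component in the first frame has no incoming arcs, so \eqref{eq:incoming} forces $\branchvar{a}^+ = 1$; a component in the last frame has no outgoing arcs, so \eqref{eq:outgoing} forces $\branchvar{a}^- = 1$. Both merely contribute constants to \eqref{eq:mcb-objective}.

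The point that needs the most care, and which I regard as the crux, is that an interior component $a \in \components_t$ is shared between the two adjacent subgraphs $\partitionGraph_{t-1,t}$ and $\partitionGraph_{t,t+1}$, so a priori one might fear that $a$ couples them. I would dispel this by observing that the branching constraints contain \emph{no} flow-conservation-type link between the in-degree and the out-degree of a node: constraint \eqref{eq:incoming} constrains only $\branchvar{a}^+$ together with incoming arcs, constraint \eqref{eq:outgoing} constrains only $\branchvar{a}^-$ together with outgoing arcs, and $\branchvar{a}^+$ and $\branchvar{a}^-$ are distinct variables joined by no further constraint. Hence the ``incoming side'' of $a$ lives entirely in $\partitionGraph_{t-1,t}$ and its ``outgoing side'' entirely in $\partitionGraph_{t,t+1}$, with no shared variable and no shared constraint. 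This precisely confirms that each variable and each constraint belongs to exactly one group, and that the assignment of births to the later frame and terminations to the earlier frame makes every component's cost land in a single subproblem (any boundary-forced variable appearing in a neighbouring subproblem is a constant and is ignored upon reassembly).

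Finally, I would invoke the elementary separability principle. Writing the objective \eqref{eq:mcb-objective} as $\sum_{t} g_t + \mathrm{const}$, where $g_t$ collects the arc terms $\cost{ab}\,\branchvar{ab}$ for $ab \in \arcs_{t,t+1}$, the birth terms $c_a^+ \branchvar{a}^+$ for $a \in \components_{t+1}$, and the termination terms $c_a^- \branchvar{a}^-$ for $a \in \components_t$, the feasible region factors as the product of the per-pair feasible regions. Therefore an assignment is globally optimal iff its restriction to each $\partitionGraph_{t,t+1}$ is optimal for the corresponding subproblem, and a global optimum is obtained by concatenating optimal solutions of the individual $\partitionGraph_{t,t+1}$. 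This is exactly the claimed decomposition.
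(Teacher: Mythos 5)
Your proof is correct and follows essentially the same route as the paper's: both rest on the observation that constraint \eqref{eq:incoming} couples $\branchvar{a}^+$ only with arcs of $\arcs_{t,t+1}$ for $a \in \components_{t+1}$, that \eqref{eq:outgoing} couples $\branchvar{a}^-$ only with arcs of $\arcs_{t,t+1}$ for $a \in \components_t$, and that the objective and feasible set therefore split into independent subproblems over the $\partitionGraph_{t,t+1}$. Your write-up is merely more explicit than the paper's -- in particular your point that no flow-conservation constraint links the incoming and outgoing sides of an interior component, and your treatment of the forced boundary variables as constants, are left implicit in the original proof.
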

\begin{proof}
The constraints \eqref{eq:incoming} only couple birth variables $y_{a}^+$ for $a \in \components_{t+1}$ with arc variables $y_{ba}$ where $ba \in \arcs_{t,t+1}$. Similarly, the constraints
  \eqref{eq:outgoing} only couple termination variables
  $y^-_a$ for $a \in \components_t$ with arc variables $y_{ab}$ where $ab \in \arcs_{t,t+1}$. Thus, the objective function and the constraints split into a set of \acp{mcb} corresponding to the subgraphs $\partitionGraph_{t,t+1}$ of $\partitionGraph$. Hence, solving $\vert
  \mathcal{T} \vert - 1$ many sub-\acp{mcb} individually gives the solution of the \ac{mcb} on $\partitionGraph$.
\end{proof}

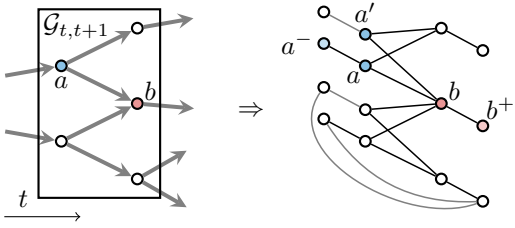
\begin{figure}
  \centering {

\definecolor{firstColor}{RGB}{20,133,204}
\definecolor{secondColor}{RGB}{212,42,42}

\tikzstyle{every node}=[circle, fill=white,
                        inner sep=0pt, minimum width=4pt]

\pgfdeclarelayer{bg}
\pgfsetlayers{bg,main}

\begin{tikzpicture}[thin]

  \draw[->] (0.25,0) -- node[above, yshift=0.1cm, xshift=-0.25cm] {$t$} (1.25, 0) ;
  \node[fill=none] at (3.5, 1.4) {$\Rightarrow$};

    \begin{scope}[thick, xshift=0cm]

      \node[draw] (a10) at (1,1) {};
      \node[draw,label={below}:$a$,fill=firstColor!50] (a11) at (1, 2) {};
      
      \node[draw] (a20) at (2., 2.5) {};
      \node[draw] (a21) at (2., .5) {};
      \node[draw,label={above right}:$b$,fill=secondColor!50] (a22) at (2.,1.5) {};
      
    \end{scope}

    \begin{scope}[gray, line width=1.5pt, >=stealth]
      \draw[<-] (a10) -- ++(170:0.75);
      \draw[<-] (a11) -- ++(190:0.75);
      
      \draw[->] (a11) -- (a20);
      \draw[->] (a11) -- (a22);
      \draw[->] (a10) -- (a22);
      \draw[->] (a10) -- (a21);

      \draw[->] (a20) -- ++(10:0.75);
      \draw[->] (a22) -- ++(-5:0.75);
      \draw[->] (a21) -- ++(30:0.75);
      \draw[->] (a21) -- ++(-30:0.75);
    \end{scope}

    \draw[thick] (0.7, 2.75) node[fill=none,label={above}:$\partitionGraph_{t,t+1}$, yshift=-0.8cm, xshift=0.5cm] {} rectangle (2.3, 0.25) ;

    \begin{scope}[thick, xshift=4cm]
      \node[draw] (a10) at (1,1) {};
      \node[draw,label={below left}:$a$,fill=firstColor!50] (a11) at (1, 2) {};

      \node[draw, above=0.25cm of a10] (a10p)  {};
      \node[draw, above=0.25cm of a11, label={above}:$a'$,fill=firstColor!50] (a11p) {};
      
      \node[draw, above left=0.25cm of a10, xshift=-0.25cm] (a10t)  {};
      \node[draw, above left=0.25cm of a10p, xshift=-0.25cm] (a10pt)  {};
      
      \node[draw, above left=0.25cm of a11, xshift=-0.25cm,  label={left}:$a^-$,fill=firstColor!25] (a11t)  {};
      \node[draw, above left=0.25cm of a11p, xshift=-0.25cm] (a11pt) {};

      \node[draw] (a20) at (2., 2.5) {};
      \node[draw] (a21) at (2., .5) {};
      \node[draw, label={above right}:$b$,fill=secondColor!50] (a22) at (2.,1.5) {};

      \node[draw, below right=0.25cm of a20, xshift=0.25cm] (a20b)  {};
      \node[draw, below right=0.25cm of a21, xshift=0.25cm] (a21b)  {};
      \node[draw, below right=0.25cm of a22, xshift=0.25cm, label={above right}:$b^+$,fill=secondColor!25] (a22b)  {};

    \end{scope}

    \begin{scope}[line width=0.2mm]
    \draw (a10) -- (a21);
    \draw (a10) -- (a22);
    \draw (a11) -- (a22);
    \draw (a11) -- (a20);

    \draw (a10p) -- (a21);
    \draw (a10p) -- (a22);
    \draw (a11p) -- (a22);
    \draw (a11p) -- (a20);

    \draw (a10) -- (a10t);
    \draw[color=gray] (a10p) -- (a10pt);
    \draw (a11) -- (a11t);
    \draw[color=gray] (a11p) -- (a11pt);

    \draw (a20) -- (a20b);
    \draw (a21) -- (a21b);
    \draw (a22) -- (a22b);

    \draw[gray] (a21b) to[bend left] (a10t);
    \draw[gray] (a21b) to[in=-125, out=-160] (a10pt);
  \end{scope}

\end{tikzpicture}
}

  \caption{Illustration of the constructed bipartite matching problem
    (\textbf{right}) for an \ac{mcb} in the subgraph of two
    consecutive frames $t,t+1$ (\textbf{left}). The matching problem
    graph consists of the original nodes and edges, duplicates $a'$
    for $a \in \components_t$, auxiliary termination nodes $a^-$ and
    auxiliary birth nodes $b^+$. Auxiliary edges which have zero cost
    by construction are gray. For simplicity, we illustrate only two
    edges between termination and birth nodes. Matched nodes
    correspond to active arcs in the original $\partitionGraph_{t,t+1}$.}
  \label{fig:matching}
\end{figure}

\begin{lemma} \label{lem:matching-mcb} An \ac{mcb} on
  $\partitionGraph_{t,t+1}$ can be transformed into an equivalent
  \acf{mcbmp}.
\end{lemma}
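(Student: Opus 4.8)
The plan is to give an explicit, cost-preserving reduction: I would construct a bipartite graph $H$ with edge costs so that minimum cost perfect matchings of $H$ are exactly the optimal solutions of the \ac{mcb} on $\partitionGraph_{t,t+1}$, and the two optimal values coincide. The guiding observation is that $\partitionGraph_{t,t+1}$ is already bipartite, with parents $\components_t$ on one side and children $\components_{t+1}$ on the other; by \eqref{eq:incoming} every child selects at most one incoming arc and by \eqref{eq:outgoing} every parent selects at most two outgoing arcs, so the \ac{mcb} is a degree-constrained subgraph problem. Such problems are linearized by the standard device of node duplication, which is what the construction in \figref{fig:matching} encodes.

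Concretely, I would build $H$ as follows. For every parent $a \in \components_t$ I introduce two copies $a, a'$ (so that up to two outgoing arcs can be realized) and two termination nodes $a^-, a'^-$; for every child $b \in \components_{t+1}$ I keep a single node $b$ and one birth node $b^+$. The left shore of $H$ collects all parent copies and all birth nodes, the right shore all child nodes and all termination nodes, so that $|L| = |R| = 2|\components_t| + |\components_{t+1}|$. I join each parent copy to every child $b$ with $ab \in \arcs_{t,t+1}$ at cost $c_{ab}$, I join $b^+$ to $b$ at cost $c_b^+$, I set the cost of $\{a, a^-\}$ to $c_a^-$ while giving $\{a', a'^-\}$ cost $0$, and finally I add all zero-cost auxiliary edges between birth and termination nodes. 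The auxiliary edges let $H$ close up to a perfect matching: the birth and termination nodes not consumed by the "real" decisions can always be paired, because the number of free termination nodes equals the number of active arcs, which by \eqref{eq:incoming} equals the number of non-born children, i.e.\ the number of free birth nodes.

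The equivalence then runs in two directions. Given a feasible $(\branchvar{}, \branchvar{}^+, \branchvar{}^-)$, I read off a matching by sending child $b$ to a copy of its (unique) active parent, or to $b^+$ when $b$ is a birth (which \eqref{eq:incoming} makes exactly the case of no active incoming arc), routing a parent's single child through copy $a$ so that $a'$ absorbs termination at zero cost, and completing through the auxiliary edges; this matching has cost exactly \eqref{eq:mcb-objective}. Conversely, a perfect matching of $H$ induces an assignment by $\branchvar{ab}=1$ iff $b$ is matched to a copy of $a$, $\branchvar{b}^+=1$ iff $b$ is matched to $b^+$, and $\branchvar{a}^-=1$ iff neither copy of $a$ is matched to a child. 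The two-copy design forces $\sum_{b\in\neighbourOut{a}}\branchvar{ab}\le 2$ for parents and $\sum_{a\in\neighbourIn{b}}\branchvar{ab}\le 1$ for children, so \eqref{eq:incoming} and \eqref{eq:outgoing} hold, and birth costs transfer verbatim: since each child is a single node bound by the equality \eqref{eq:incoming}, it is matched either to a parent copy or to $b^+$, incurring $c_b^+$ precisely when it is born, with no ambiguity.

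The one delicate point — and the step I expect to be the main obstacle — is the termination cost, because a parent owns two copies. A matching could wastefully route a parent's single child through $a'$ and leave $a$ matched to $a^-$, paying $c_a^-$ even though the parent does not terminate. I resolve this by the asymmetric cost assignment above together with the nonnegativity $c_a^- \ge 0$: a minimum cost matching always prefers the zero-cost copy $a'$ for termination, so it pays $c_a^-$ exactly when the parent has no matched child, matching the term $c_a^- \branchvar{a}^-$. Hence reading off an induced branching never increases the cost, giving that the \ac{mcb} optimum is at most the matching optimum, while the forward construction produces a matching of equal cost, giving the reverse inequality. The optimal values therefore coincide, and an optimal matching yields an optimal branching.
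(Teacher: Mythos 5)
Your proof is correct and follows essentially the same construction as the paper: duplicate each parent $a \in \components_t$, attach termination nodes $a^-$, $a'^-$ with costs $c_a^-$ and $0$, birth nodes $b^+$ with cost $c_b^+$, and zero-cost auxiliary edges, then read perfect matchings off as branchings and conversely. Your only deviations --- connecting \emph{all} birth--termination pairs rather than only those with $ab \in \arcs_{t,t+1}$, and the explicit handling via $c_a^- \geq 0$ of matchings that wastefully route a single child through $a'$ while paying $c_a^-$ at $a$ --- are harmless and in fact make the perfect-matching completion count and the cost equivalence more watertight than the paper's brief remarks.
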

\begin{proof}
  For a given \ac{mcb} on $\partitionGraph_{t,t+1}$, we construct an
  \ac{mcbmp} as follows (illustrated in
  \figref{fig:matching}): 1) insert a duplicate $a'$ for each node $a
  \in \components_t$ and add an arc $a'b$ for each original arc $ab
  \in \arcs_{t,t+1}$ with identical cost $c_{a'b} = c_{ab}$. 2) For
  each node $a \in \components_t$, insert a node $a^-$ and an arc
  $aa^-$ with its cost being $c^-_a$, \ie the cost of terminating in
  $a$. Repeat this for all duplicate nodes $a'$ but set the according
  cost $c^-_{a'} = 0$. Similarly, add a node $b^+$ for each $b \in
  \components_{t+1}$ and an arc $b^+b$ with a cost of $c^+_b$. 3)
  Connect each pair of auxiliary nodes $b^+$ and $a^-$ (or $a'^-$)
  with an arc if $ab \in \arcs_{t,t+1}$ with a cost of 0. The
  resulting graph is clearly bipartite.
  
  Now, consider the \ac{mcbmp} on
  this graph: A match $(a,b)$ or $(a',b)$ corresponds to $y_{ab} = 1$, respectively $y_{a'b} = 1$,
  a match of $(a,a')$ to $y_a^-=1$ and vice versa for birth variables
  $y_b^+$. Exactly one incoming arc for each node of
  $\components_{t+1}$ or the link to its birth node $b^+$ is matched,
  satisfying~\eqref{eq:incoming}. In the same fashion, each $a \in
  \components_{t}$ is assigned to a node $b \in \components_{t+1}$ or
  its termination node $a^-$, satisfying the left hand side
  of~\eqref{eq:outgoing}. Assigning a duplicate node $a'$ to a node $b
  \in \components_{t+1}$ allows having bifurcations, \ie satisfies the
  right-hand side of~\eqref{eq:outgoing}, while its alternative
  choice, matching it to its zero-cost termination node has no effect
  on the cost. Finally, the zero-cost arcs between the auxiliary birth and
  termination nodes $a^-$ and $b^+$ are matched whenever a pair of $a$ or
  and $b$ is matched (due to lack of
  alternatives).
\end{proof}

\vspace{-1ex}

The \ac{mcbmp} can be solved in polynomial time by the hungarian
algorithm~\cite{kuhn-1955,munkres-1957}. Applying it to each of the
$\vert \mathcal{T} \vert - 1$ subgraphs of $G_{t,t+1}$ thus leaves us
with an efficient minimizer for the \ac{mcb}.

\vspace{-1ex}

\paragraph{Implementation of KLB.}
The algorithm maintains the weighted $\partitionGraph=(\components,
\arcs)$, the current objective in terms of each of the three parts of
\eqref{eq:composed-objective}, and solves the \ac{mcb} on
$\partitionGraph$ by the matching-based algorithm described in the
previous section. We initially solve the entire \ac{mcb}, but then,
within both methods that propose transformations,
$\mathrm{improveBipartition}$ and $\mathrm{splitPartition}$, we
exploit the locality of the introduced changes. By applying
Lemma~\ref{lem:independent-mcb}, we note that for a given
$\components$, modifying two of its cells $a$ and $b$ in frame $t$
will only affect arcs that go from $t-1$ to $t$ and from $t$ to
$t+1$. In other words, $\Delta f_{\mathrm{MCBP}}$ can be computed only
from the subproblems of $(t-1,t)$ and $(t,t+1)$.  In practice, we find
that the effect is often also spatially localized, hence we optionally
restrict ourselves to only updating the \ac{mcb} in a range of
$d_{\mathrm{MCBP}}$ (undirected) arc hops from $a$ and $b$, where the
modification occured. This $d_{\mathrm{MCBP}}$ parameter should be
explored and set depending on the instance, since choosing it too
small may result in misjudged moves and thus, in wrong incremental
changes to the current objective. Note, however, that feasibility is
still maintained in any case.  We handle this by solving the
\emph{entire} \ac{mcb} once at the end of every outer iteration. Doing
so ensures that the final objective is always correct and allows us to
detect choices of $d_{\mathrm{MCBP}}$ that are too small. Since we
observe that it takes relatively few outer iterations, we find the
overhead by these extra calls to be negligible.

To reduce the number of overall calculations in later iterations, we
mark components that have changed and then, in the next iteration,
attempt to improve only those pairs of components which involve at
least one changed component. To account for changes that affect moves
in previous or subsequent frames, we propagate these
$\mathrm{changed}$ flags to all potential parents or children of a
changed component.

\section{Improved Branch-and-Cut Algorithm} \label{sec:branch-and-cut}
Jug et al.\ propose to solve the \ac{mlt} with a branch-and-cut algorithm, for which they design separation procedures for inequalities \eqref{eq:space-cycle} -- \eqref{eq:morality}, \eqref{eq:birth} -- \eqref{eq:termination} and the bifurcation constraints. In the following, we propose several modifications of the optimization algorithm, which drastically improve its performance.

It is sufficient to consider only chordless cycles in \eqref{eq:space-cycle} and, furthermore, it is well-known that chordless cycle inequalities are facet-defining for multicut polytopes (cf.\ \cite{chopra-1993} and \cite{hornakova-2017}). This argument can be analogously transferred to inequalities \eqref{eq:space-time-cycle} and \eqref{eq:morality}.

Moreover, the inequalities of \eqref{eq:morality} where $\{v_t,w_t\} \in E_t$ is an edge of the hypothesis graph may be considerably strengthened by a less trivial, yet simple modification. Lemma \ref{lem:cycle-morality-ineq} shows that with both results combined, we can equivalently replace \eqref{eq:space-cycle} -- \eqref{eq:morality} by the set of tighter inequalities \eqref{eq:cycle} and \eqref{eq:morality-reduced}. Proofs are provided in the supplementary material. In relation to our improved version of the branch-and-cut algorithm, we refer to \eqref{eq:cycle} as \emph{cycle} and to \eqref{eq:morality-reduced} as \emph{morality} constraints.

\begin{lemma} \label{lem:cycle-morality-ineq}
For every hypothesis graph $G = (V,E)$ it holds that $x \in X'_G$ iff $x \in \{0,1\}^E$ and $x$ satisfies
\begin{align}
& \forall t \in \mathcal{T} \forall \{v,w\} \in E_t \cup E_{t,t+1} \nonumber \\ 
& \forall \text{ chordless $vw$-paths } P \text{ in } G_t^+: \nonumber \\
& \qquad x_{vw} \leq \sum_{e \in P} x_e \label{eq:cycle} \\
& \forall t \in \mathcal{T} \forall v',w' \in V_t \text{ such that } \{v',w'\} \notin E_t \nonumber \\ 
& \forall v'w'\text{-cuts } S \text{ in } G_t \forall \text{ chordless $v'w'$-paths } P \text{ in } G_t^+: \nonumber \\
& \qquad 1 - \sum_{e \in S} (1 - x_e) \leq \sum_{e \in P} x_e \label{eq:morality-reduced}
\end{align}
\end{lemma}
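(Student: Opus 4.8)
The plan is to prove this set-equivalence directly at the level of binary vectors, using the characterization established in \cite{jug-2016} that $x \in X'_G$ if and only if $L := x^{-1}(1)$ is a lineage cut, i.e.\ satisfies conditions (1)--(3) of the definition of a lineage cut. First I would rewrite the two new families purely as connectivity statements in the graphs $(V_t, E_t \setminus L)$ and $(V_t^+, E_t^+ \setminus L)$. For binary $x$ it is enough to consider chordless paths, since any uncut $vw$-path contains a chordless uncut $vw$-subpath; hence \eqref{eq:cycle} says exactly that for every $\{v,w\} \in E_t \cup E_{t,t+1}$, if $v$ and $w$ are path-connected in $(V_t^+, E_t^+ \setminus L)$ then $\{v,w\} \notin L$. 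Reading $1 - \sum_{e \in S}(1-x_e)$ as being positive precisely when the cut $S$ lies entirely in $L$, and noting that some $v'w'$-cut of $G_t$ lies in $L$ iff $v'$ and $w'$ are separated in $(V_t, E_t \setminus L)$, \eqref{eq:morality-reduced} says exactly that for every non-adjacent pair $v',w' \in V_t$, path-connectivity of $v',w'$ in $(V_t^+, E_t^+ \setminus L)$ implies path-connectivity in $(V_t, E_t \setminus L)$.

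With these reformulations the forward implication is immediate. The $E_{t,t+1}$ instances of \eqref{eq:cycle} are exactly condition (2). For an $E_t$ instance, a cut edge $\{v,w\} \in E_t \cap L$ has, by \eqref{eq:cycle}, its endpoints disconnected in $(V_t^+, E_t^+ \setminus L)$ and hence also in the subgraph $(V_t, E_t \setminus L)$, so $E_t \cap L$ is a multicut of $G_t$ and condition (1) holds. Condition (3) follows by combining both families: if the two temporal edges are uncut and the children $v_{t+1}, w_{t+1}$ are connected in $(V_{t+1}, E_{t+1} \setminus L)$, then $v_t, w_t$ are connected in $(V_t^+, E_t^+ \setminus L)$; when $\{v_t,w_t\} \in E_t$ this edge is uncut by \eqref{eq:cycle}, so the parents are connected in $(V_t, E_t \setminus L)$, and when $\{v_t,w_t\} \notin E_t$ the same conclusion is exactly \eqref{eq:morality-reduced}.

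For the reverse implication I would isolate the one genuinely non-trivial step as an \emph{un-crossing} lemma: assuming $L$ is a lineage cut, for all $v,w \in V_t$ connectivity in $(V_t^+, E_t^+ \setminus L)$ already implies connectivity in $(V_t, E_t \setminus L)$. I would prove it by induction on the number of $V_{t+1}$-vertices on a connecting path. A maximal excursion into frame $t+1$ has the form $a - p_1 - \dots - p_k - b$ with $a,b \in V_t$, uncut temporal edges $\{a,p_1\}, \{p_k,b\}$, and $p_1,\dots,p_k$ connected in $(V_{t+1}, E_{t+1} \setminus L)$; condition (3) then forces $a$ and $b$ to be connected in $(V_t, E_t \setminus L)$, so the excursion can be replaced by an intra-frame path, strictly lowering the count. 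Given this lemma, both new families follow: a cut edge $\{v,w\} \in E_t \cap L$ cannot be connected in $(V_t^+, E_t^+ \setminus L)$, as that would force connectivity in $(V_t, E_t \setminus L)$ and contradict condition (1), yielding the $E_t$ instances of \eqref{eq:cycle}; the $E_{t,t+1}$ instances are again condition (2); and for non-adjacent $v',w'$ the lemma is precisely \eqref{eq:morality-reduced}.

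I would also record the algebraic observation behind the ``strengthening'' and behind the restriction to non-adjacent pairs in \eqref{eq:morality-reduced}: whenever $\{v_t,w_t\} \in E_t$, every $v_tw_t$-cut $S$ of $G_t$ necessarily contains the edge $\{v_t,w_t\}$, so $1 - \sum_{e \in S}(1-x_e) \le x_{v_tw_t}$. Hence every such morality inequality is dominated by the cycle inequality \eqref{eq:cycle} for the base edge $\{v_t,w_t\}$ and the $G_t^+$-path $\{v_tv_{t+1}\} \cup P \cup \{w_tw_{t+1}\}$, while the single-edge cut $S = \{\{v_t,w_t\}\}$ conversely recovers that cycle inequality. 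I expect the main obstacle to be the un-crossing lemma, where one must check that iterating condition (3) contracts a path that may enter and leave frame $t+1$ several times and that the induction terminates; passing to connectivity is what makes this clean, since only the existence of connecting paths, not of specific chordless ones, has to be tracked.
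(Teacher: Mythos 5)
Your reverse direction is essentially right: reducing everything to the un-crossing statement (if $L$ is a lineage cut, then for $v,w \in V_t$ connectivity in $(V_t^+, E_t^+ \setminus L)$ implies connectivity in $(V_t, E_t \setminus L)$, proved by eliminating excursions into frame $t+1$ via condition (3)) is the correct and natural tool, and your closing algebraic observation — that for $\{v_t,w_t\} \in E_t$ every $v_tw_t$-cut contains that edge, so those morality inequalities are dominated by cycle inequalities — is also correct. The genuine gap is in the forward direction, concentrated in your very first step: the claim that ``any uncut $vw$-path contains a chordless uncut $vw$-subpath,'' which you use to reformulate \emph{each family separately} as a connectivity statement. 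That claim is false. Even inside one frame, the path $v\!-\!a\!-\!b\!-\!w$ with all three edges uncut and a \emph{cut} chord $\{a,w\}$ contains no chordless uncut subpath; there the standard repair is a chord-splitting induction in which the cut chord is excluded by \emph{its own} chordless inequality. Across frames, however, this repair is unavailable, because a cut chord lying in $E_{t+1}$ cannot be excluded at all: a dividing cell has daughters that may be spatially adjacent yet separated, so $x_{pq} = 1$ for $\{p,q\} \in E_{t+1}$ with $p,q$ joined by an uncut path through frame $t$ is perfectly feasible.

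Concretely, your reformulation (A) of \eqref{eq:cycle} is not implied by the chordless instances of \eqref{eq:cycle}. Take $V_t = \{v,a,w\}$, $V_{t+1} = \{p,q\}$, $E_t = \{\{v,w\}\}$, $E_{t,t+1} = \{\{v,p\},\{a,p\},\{a,q\},\{q,w\}\}$, $E_{t+1} = \{\{p,q\}\}$, and set $x_{vw} = x_{pq} = 1$ with all temporal edges labeled $0$. Every chordless instance of \eqref{eq:cycle} holds: the only nontrivial one is the base edge $\{v,w\}$ with chordless path $v\!-\!p\!-\!q\!-\!w$, whose right-hand side is $x_{pq} = 1$, while the uncut path $v\!-\!p\!-\!a\!-\!q\!-\!w$ has chord $\{p,q\}$ and is exempt; the edge $\{p,q\}$ is not a base edge at time $t$, and at time $t+1$ its inequalities are vacuous. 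Yet $v$ and $w$ are connected in $(V_t^+, E_t^+ \setminus L)$ with $x_{vw} = 1$, so (A) fails, and with it your derivation of condition (1) from \eqref{eq:cycle} alone. This $x$ is of course not in $X'_G$ — but only because \eqref{eq:morality-reduced} is violated, namely for the non-adjacent pair $v,a$ with cut $S = \emptyset$ and chordless path $v\!-\!p\!-\!a$. So the conjunction of \eqref{eq:cycle} and \eqref{eq:morality-reduced} is equivalent to the conjunction of your connectivity statements (A) and (B), but the per-family equivalences you assert are false, and the forward direction requires an interleaved induction (e.g., on path length) that uses morality inequalities to dispose of excursions whose chords are cut $E_{t+1}$-edges. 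That interleaved argument is the actual core of the lemma and is missing from your proposal; the un-crossing lemma, which you flag as the main obstacle, is by comparison the easy half.
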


\begin{remark}
Suppose we introduce for every pair of non-neighboring nodes $v',w' \in V_t$ a variable $x_{v'w'}$ indicating whether $v'$ and $w'$ belong to the same cell ($x_{v'w'} = 0$) or not ($x_{v'w'} = 1$). Then any inequality of \eqref{eq:morality-reduced} is exactly the combination of a \emph{cut} inequality $1 - x_{v'w'} \leq \sum_{e \in S} (1 - x_e)$ and a \emph{path} inequality $x_{v'w'} \leq \sum_{e \in P} x_e$ in the sense of \emph{lifted multicuts} \cite{hornakova-2017}. For neighboring nodes $v,w \in V_t$, i.e.\ $\{v,w\} \in E_t$, we have the variable $x_{vw}$ at hand and can thus omit the cut part of the morality constraint, as the lemma shows.
\end{remark}

\vspace{-2ex}

\paragraph{Termination and Birth Constraints.}

We further suggest a strengthening of the birth and termination constraints in the \ac{mlt}. To this end, for any $v \in V_{t+1}$ let $V_t(v) = \{ u \in V_t \mid \{u,v\} \in E_{t,t+1}\}$ be the set of neighboring nodes in frame $t$. Further, we denote by $E\big(V_t(v),V_{t+1}\setminus \{v\}\big)$ the set of inter frame edges that connect some node $u_t \in V_t(v)$ with some node $u_{t+1} \in V_{t+1}$ different from $v$.

\begin{lemma} \label{lem:birth-termination}
For every hypothesis graph $G = (V,E)$, the vectors $x \in X'_G, \; x^+, x^- \in \{0,1\}^V$ satisfy inequalities \eqref{eq:birth} iff the following inequalities hold:
\begin{align}
\forall t \in \mathcal{T} \forall v \in V_{t+1} \forall S \in V_tv\text{-cuts}(G_t^+) : \nonumber \\
1 - x_v^+ \leq \sum_{e \in S \setminus E(V_t(v),V_{t+1}\setminus \{v\})} (1 - x_e). \label{eq:birth-reduced}
\end{align}
Similarly, $x \in X'_G, \; x^+, x^- \in \{0,1\}^V$ satisfy \eqref{eq:termination} iff
\begin{align}
\forall t \in \mathcal{T} \forall v \in V_{t} \forall S \in vV_{t+1}\text{-cuts}(G_t^+) : \nonumber \\
1 - x_v^- \leq \sum_{e \in S \setminus E(V_t \setminus \{v\},V_{t+1}(v))} (1 - x_e) \label{eq:termination-reduced}
\end{align}
hold true.
\end{lemma}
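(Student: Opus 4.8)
The two inequality families in \eqref{eq:birth} and \eqref{eq:birth-reduced} differ only in that the reduced version drops, from the right-hand side, the nonnegative terms indexed by the edges of $N := E\big(V_t(v),V_{t+1}\setminus\{v\}\big)$. Hence the implication \eqref{eq:birth-reduced}$\Rightarrow$\eqref{eq:birth} is immediate and needs no assumption on $x$: shrinking the index set of a sum of nonnegative summands $1-x_e$ can only decrease the right-hand side, so the reduced inequality (smaller RHS) dominates the original one. The whole content of the lemma is therefore the reverse implication, for which I would exploit $x\in X'_G$, and in particular the space-time cycle inequalities \eqref{eq:space-time-cycle}.

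For the reverse direction, fix $t$, a node $v\in V_{t+1}$, and a $V_tv$-cut $S$ in $G_t^+$, and assume all inequalities \eqref{eq:birth} hold. If $x_v^+=1$ the reduced inequality is trivial, so suppose $x_v^+=0$. Writing $H$ for the ``active'' subgraph $(V_t^+, E_t^+\setminus x^{-1}(1))$, the fact that \eqref{eq:birth} holds for \emph{every} $V_tv$-cut means, via Menger's theorem, that $v$ lies in the same connected component of $H$ as some node of $V_t$. The plan is to upgrade this to connectivity in the smaller graph $H\setminus N$: once I exhibit an active $V_t$-to-$v$ path that avoids all edges of $N$, that path must cross the cut $S$, and its crossing edge then lies in $S\setminus N$ and is active, which is exactly the statement $\sum_{e\in S\setminus N}(1-x_e)\ge 1$ required by \eqref{eq:birth-reduced}.

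So the crux is the key connectivity claim: for $x\in X'_G$, if $v$ is connected to $V_t$ in $H$, then it is connected to $V_t$ in $H\setminus N$. I would prove it by taking any active $V_t$-to-$v$ path and inspecting its last inter-frame edge $g=\{a_t,a_{t+1}\}$, where $a_t\in V_t$ and the suffix from $a_{t+1}$ to $v$ uses only active edges of $E_{t+1}$, so that $a_{t+1}$ and $v$ lie in the same cell of frame $t+1$. There are three cases. If $a_{t+1}=v$, then $g=\{a_t,v\}$ is incident to $v$, hence $g\notin N$, and $g$ alone connects $v$ to $V_t$ in $H\setminus N$. If $a_{t+1}\neq v$ but $a_t\notin V_t(v)$, then $g\notin N$ by the definition of $N$, and the suffix (lying in $E_{t+1}$, disjoint from $N$) followed by $g$ gives the required path. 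The remaining case $a_{t+1}\neq v$ and $a_t\in V_t(v)$, where $g\in N$ and must be rerouted, is the main obstacle, and it is precisely where $X'_G$ enters.

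In that last case one has $\{a_t,v\}\in E_{t,t+1}$, and concatenating $g$ with the active $E_{t+1}$-path from $a_{t+1}$ to $v$ yields an active $a_tv$-path $P$ in $G_t^+$. Applying the space-time cycle inequality \eqref{eq:space-time-cycle} to the edge $\{a_t,v\}$ and this path $P$ gives $x_{a_tv}\le\sum_{e\in P}x_e=0$, so $\{a_t,v\}$ is itself active; being incident to $v$ it avoids $N$, and it connects $v$ to $a_t\in V_t$ inside $H\setminus N$. This settles the claim and hence the equivalence of \eqref{eq:birth} and \eqref{eq:birth-reduced}. Finally, I expect the equivalence of \eqref{eq:termination} and \eqref{eq:termination-reduced} to follow from the time-reversed version of the same argument: one examines instead the \emph{first} inter-frame edge of an active $v$-to-$V_{t+1}$ path and invokes \eqref{eq:space-time-cycle} for the edges of $E_{t,t+1}$ incident to $v\in V_t$, with $N$ replaced by $E\big(V_t\setminus\{v\},V_{t+1}(v)\big)$.
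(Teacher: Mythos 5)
Your proof is correct and uses what is essentially the intended mechanism (the paper defers its own proof to the supplementary material, but your argument matches the construction the strengthened constraints are built on, cf.\ the remark on lifted multicuts): the reduced inequalities imply the originals trivially since the dropped terms $1-x_e$ are nonnegative, and for the converse you extract an active $V_t$-to-$v$ path, split on its last inter-frame edge $\{a_t,a_{t+1}\}$, and in the only nontrivial case $a_t \in V_t(v)$, $a_{t+1}\neq v$ invoke the space-time cycle inequality \eqref{eq:space-time-cycle} on the edge $\{a_t,v\}$ with the all-active path $P$ to force $x_{a_tv}=0$, yielding a $V_t$-to-$v$ path avoiding $E\big(V_t(v),V_{t+1}\setminus\{v\}\big)$ that must cross $S$ in an active edge. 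The case analysis is exhaustive, the component argument establishing connectivity from \eqref{eq:birth} is sound, and the time-reversed treatment of \eqref{eq:termination-reduced} via the first inter-frame edge is the correct symmetric counterpart, so there are no gaps.
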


\paragraph{Additional Odd Wheel Constraints.}

A \emph{wheel} $W = (V(W), E(W))$  is a graph that consists of a cycle and a dedicated center node $w \in V(W)$ which is connected by an edge to every node in the cycle. Let $E_C$ denote the edges of $W$ in the cycle and $E_w$ the remaining center edges. With a wheel subgraph $W = (V(W), E(W))$ of a graph $G$ we may associate an inequality
\begin{align}
\sum_{e \in E_C} x_e - \sum_{e \in E_w} x_e \leq \left \lfloor \frac{\lvert V(W) \rvert - 1}{2} \right \rfloor, \label{eq:wheel}
\end{align}
which is valid for multicut polytopes \cite{chopra-1993}. A wheel is called \emph{odd} if $\lvert V(W) \rvert - 1$ is odd. It is known that wheel inequalities are facet-defining for multicut polytopes iff the associated wheel is odd \cite{chopra-1993}.

We propose to add additional odd wheel inequalities to the \ac{mlt} in order to strengthen the corresponding LP relaxation. More precisely, we consider only wheels $W = (V(W),E(W)) \subset G$ such that $w \in V_{t+1}$ and $v \in V_t$ for all $v \in V(W) \setminus w$ and some $t \in \mathcal{T}$. This structure guarantees that for any $x \in X'_G$, the restriction $x_{E(W)}$ is the incidence vector of a multicut of $W$. Therefore, \eqref{eq:wheel} holds with respect to $x$.

\begin{figure}[t]
  \centering
  \includegraphics[width=\linewidth]{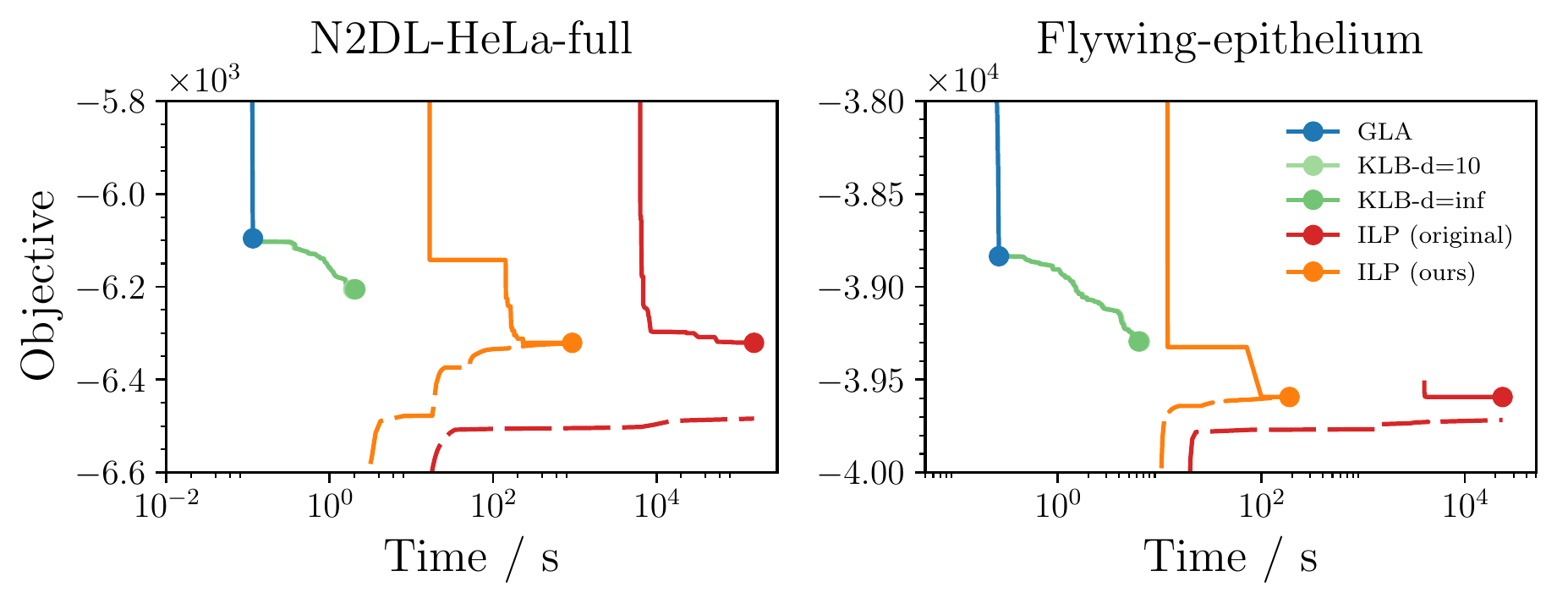}

  \caption{Comparison of algorithms for the \ac{mlt} in terms of
    runtime, objective (solid) and bounds (dashed) on the large
    instances of \cite{jug-2016}. Our heuristics are able to determine
    feasible solutions quickly, while our branch-and-cut algorithm
    (ILP ours) converges to the optimal solution in up to one
    hundredth of the time of the original branch-and-cut algorithm
    (ILP original) and provides tight bounds in both cases.  On these
    instances, KLB exhibits no significant runtime difference between
    the two choices of $d_{\mathrm{MCBP}}$.}
  \label{fig:runtime}
\end{figure}

\vspace{-1ex}

\paragraph{Implementation.}
For a subset of the constraints, we use the commercial branch-and-cut solver Gurobi (7.0) \cite{gurobi} to solve the LP relaxation and find integer feasible solutions. Whenever Gurobi finds an integer feasible solution $x$, we check whether $x \in X_G$ and all birth and termination constraints are satisfied. If not, then we provide Gurobi with an additional batch of violated inequalities from \eqref{eq:cycle} -- \eqref{eq:termination-reduced} as well as violated bifurcation constraints and repeat. To this end, we adapt the separation procedures of \cite{jug-2016} to account for our improvements in a straight-forward manner. We further add odd wheel inequalities for wheels with 3 outer nodes as described above (so-called \emph{3-wheels}) to the starting LP relaxation.

For every integer feasible solution that Gurobi finds, we fix the connected components of the intra-frame segmentation and solve the remaining \ac{mcb}. This allows for the early extraction of feasible lineage forests from the ILP.

\section{Experiments \& Results}

\paragraph{Instances and Setup.}
We evaluate our algorithms on the two large instances
of~\cite{jug-2016}: \emph{Flywing-epithelium} and
\emph{N2DL-HeLa-full}. The hypothesis graph of the former instance
consists of 5026 nodes and 19011 edges, while the latter consists of
10882 nodes and 19807 edges.  In addition to this, we report
experiments on two more sequences of a flywing epithelium time-lapse
microscopy with a wider field of view. Their hypothesis
graphs consist of 10641 nodes and 42236 edges, respectively 76747
edges. We denote the data sets with \emph{Flywing-wide I} and
\emph{II}. These instances are preprocessed with the same pipeline as
\emph{Flywing-epithelium}. For details on the preprocessing, we refer
to~\cite{jug-2016}.

Our choice of birth and termination costs follows \cite{jug-2016}, \ie
we set $\cost{}^+ = \cost{}^-=5$ for all instances.  We initialize the
KLB heuristic with the solution of GLA to decrease the number of outer
iterations. We benchmark two versions of KLB: The first one is denoted
with KLB-d=inf and solves the \ac{mcb} within the (reachable) subgraph
of $t \pm 1$, while the second, KLB-d=10, additionally exploits
spatial locality, \ie it uses $d_{\mathrm{MCBP}}=10$.

\begin{figure}
  \centering
  \includegraphics[width=\linewidth]{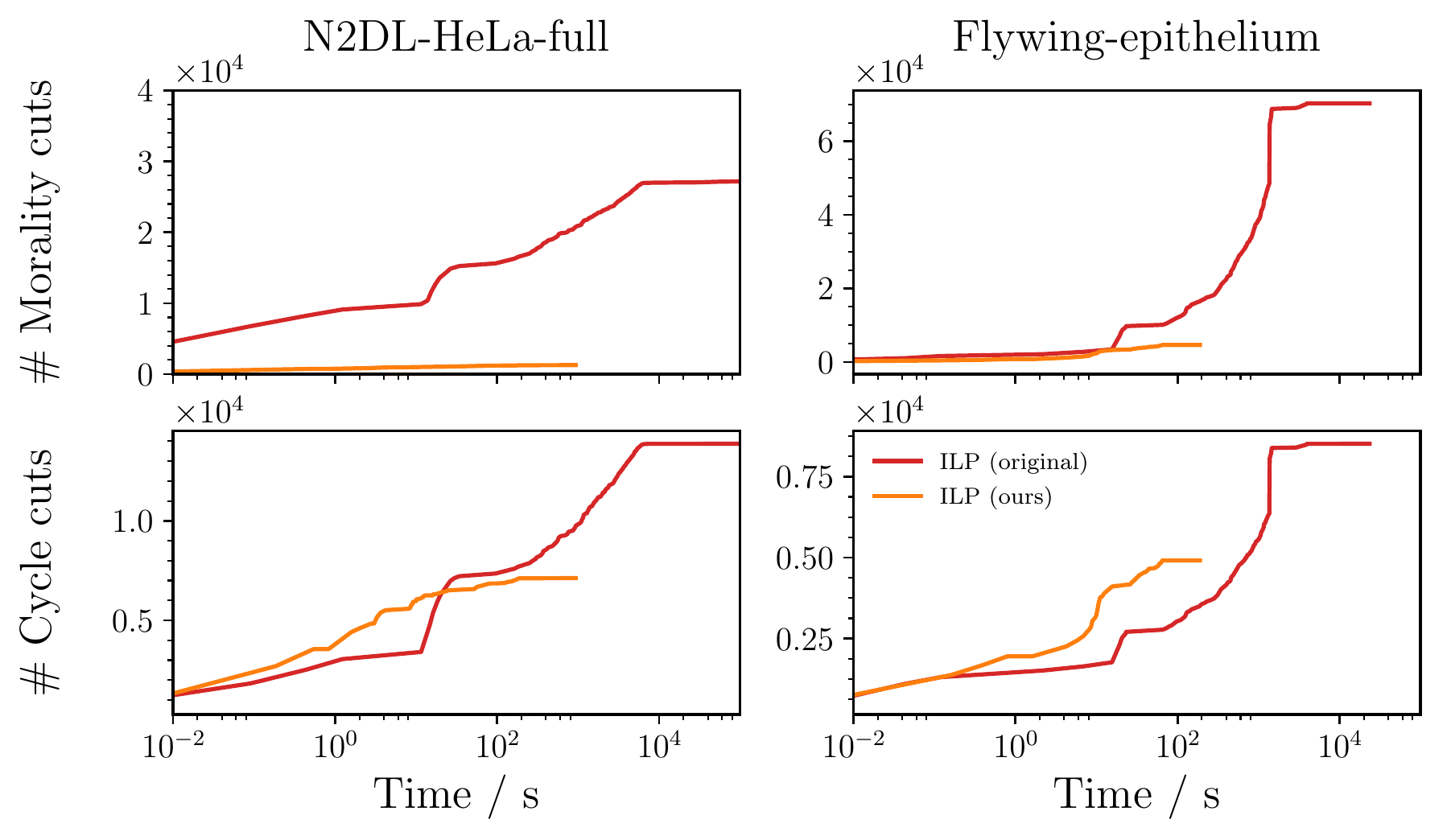}
  \caption{Number of morality cuts (\textbf{top}), \ie
    \eqref{eq:morality} or \eqref{eq:morality-reduced}, and cycle cuts
    (\textbf{bottom}), \ie \eqref{eq:space-cycle} and
    \eqref{eq:space-time-cycle} or \eqref{eq:cycle}, separated in the
    different branch-and-cut algorithms. We observe that our
    branch-and-cut algorithm requires considerably fewer morality
    cuts, while the number of cycle cuts (including both space-cycles
    and space-time-cycles) is in the same order of magnitude.}
  \label{fig:number-of-cuts}
\end{figure}

\begin{table*}
  \centering
  \caption{Detailed quantitative comparison of algorithms for the \ac{mlt}. BestGap is calculated using the tightest bound of any algorithm, while Gap is based on the bound established by each particular algorithm. KLB-d=inf solves the \ac{mcb} in the entire reachable subgraph of $\{ t-1, t, t+1 \}$, while KLB-d=10 additionally uses spatial locality with $d_{\mathrm{MCBP}}=10$.}
  \label{table:runtime}
  \begin{footnotesize}
    \begin{tabularx}{\textwidth}{X|rrrrr|rrrrr}
      \toprule
      & \multicolumn{5}{c|}{ Flywing-epithelium } & \multicolumn{5}{c}{N2DL-HeLa-full} \\
      Method                          &   Time / s &    objBest &   objBound &        Gap &    BestGap &   Time / s &    objBest &   objBound &        Gap &    BestGap \\
      \midrule
      GLA                             &       0.26 &  -38835.90 &            &            &     0.0195 &       0.12 &   -6095.85 &            &            &     0.0369 \\
      KLB-d=10                        &       6.42 &  -39294.65 &            &            &     0.0076 &       1.95 &   -6205.54 &            &            &     0.0186 \\
      KLB-d=inf                       &       6.24 &  -39294.65 &            &            &     0.0076 &       2.06 &   -6205.54 &            &            &     0.0186 \\
      ILP (ours)                      &     189.41 &  -39593.90 &  -39593.90 &     0.0000 &     0.0000 &     931.07 &   -6320.81 &   -6320.81 &     0.0000 &     0.0000 \\
      ILP (original)~\cite{jug-2016}  &   23460.80 &  -39593.90 &  -39717.80 &     0.0031 &     0.0000 &  156542.00 &   -6320.81 &   -6484.02 &     0.0258 &     0.0000 \\
      \bottomrule
    \end{tabularx}
  \end{footnotesize}
\end{table*}

\vspace{-1em}

\paragraph{Convergence Analysis.}
The convergence of our algorithms in comparison to the branch-and-cut
algorithm of~\cite{jug-2016} is reported in \figref{fig:runtime} and
\tableref{table:runtime}. We find that GLA is the fastest in all
instances, but only reaches a local optimum with a gap of about
$1.95\,\%$ and $3.69\,\%$, respectively.  This solution is improved by
KLB in terms of objective, up to a gap of $0.76\,\%$ and
$1.86\,\%$. Both variants of KLB obtain the same solution in terms of
cut-edge labeling and show no considerable runtime difference. We find
that KLB spends most of the time in the first outer iteration, where
it has to check a large number of bipartitions that do not improve and
will therefore not be considered in the next iteration. Our KLB
implementation could potentially be sped up by updating components (of
disjoint $\partitionGraph_{t-1:t+1}$) in
parallel.

The improved branch-and-cut algorithm retrieves feasible solutions
considerably faster and provides tighter bounds than the algorithm
of~\cite{jug-2016}. The instances \textit{Flywing-epithelium} and
\textit{N2DL-HeLa} are solved to optimality in less than $200\,s$,
respectively $1000\,s$, while the original algorithm did not find any
feasible solutions in that time. As shown in
\figref{fig:number-of-cuts}, we observe that our modifications of the
branch-and-cut algorithm greatly reduce the number of morality cuts.

On the larger instances \emph{Flywing-wide I} and \emph{II}, we
present our results in~\figref{fig:runtime-wide}. We are able to
determine the maximal optimality gaps for GLA to be $2.9\,\%$
(\emph{I}) and $2.1\,\%$ (\emph{II}), and $1.3\,\%$ (\emph{I}) and
$0.95\,\%$ (\emph{II}) for KLB. Again, both variants of KLB obtain
identical solutions. Here, exploiting spatial locality helps:
KLB-d=inf runs in $477\,s$ (\emph{I}) and $9129\,s$ (\emph{II}), while
KLB-d=10 reduces this to $104\,s$ and $3359\,s$, respectively. The
particular choice of $d_{\mathrm{MCBP}}=10$ was found to be stable in
both cases. More extensive results with varying $d_{\mathrm{MCBP}}$
can be found in the supplement.

\begin{figure}
  \centering
  \includegraphics[width=\linewidth]{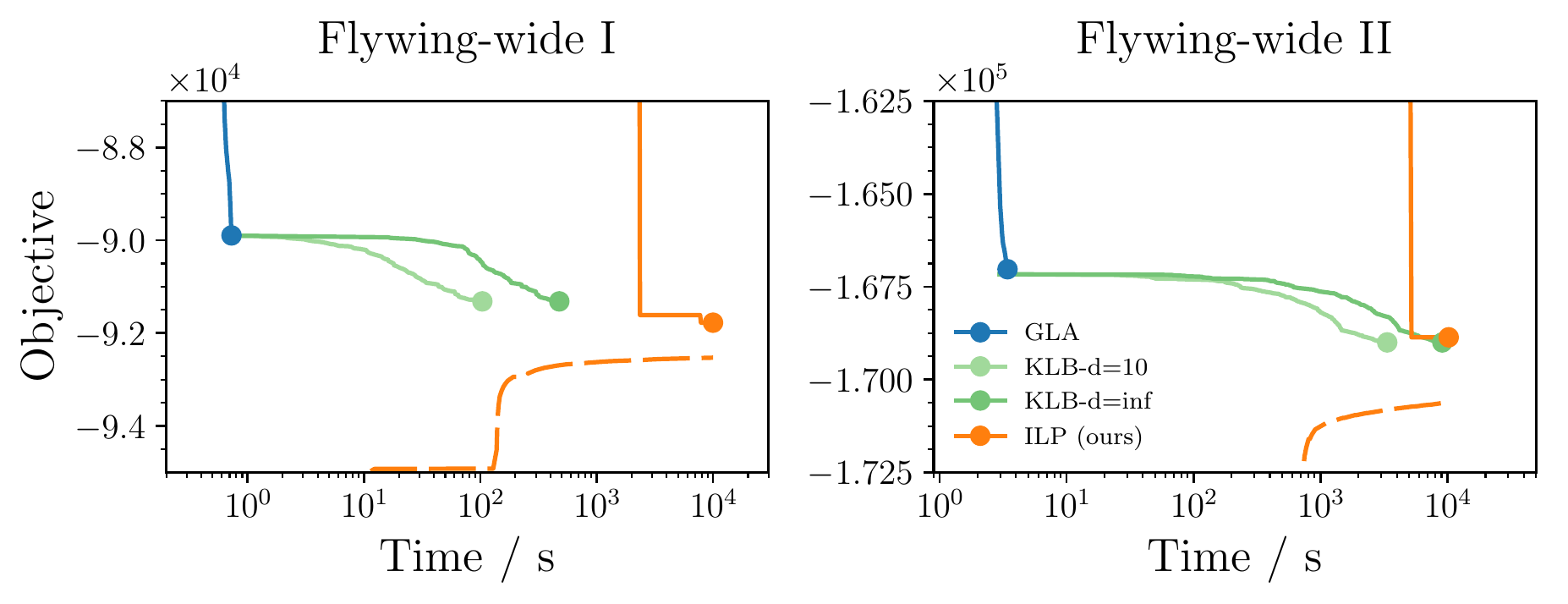}

  \caption{Results on the more extensive instances \emph{Flywing-wide
      I} and \emph{II}. Our branch-and-cut algorithm with 3-wheel
    constraints provides slightly tighter bounds, with which we
    determine the gaps for GLA to be $2.9\,\%$ (\emph{I}) and
    $2.1\,\%$ (\emph{II}), and $1.3\,\%$ (\emph{I}) and $0.95\,\%$
    (\emph{II}) for KLB. Exploiting spatial locality when re-solving
    the \acp{mcb} considerably reduces runtime of KLB.}
  \label{fig:runtime-wide}
\end{figure}

\begin{table}
  \caption{Comparison of the similarity to ground truth of segmentation (SEG) and traced lineage forest (TRA) on \emph{Flywing-epithelium}.
    ILP denotes the result of the branch-and-cut algorithm, while PA~\cite{aigouy-2010} is a common tracking tool used by biologists.}
  \label{table:scores}
  \begin{center}
    \begin{small}
      \begin{tabular}{lll}
        \toprule
        Algorithm & SEG & TRA \\
        \midrule
        GLA  & 0.9363 & 0.9640 \\
        KLB  & 0.9485 & 0.9721 \\
        ILP & 0.9722 & 0.9813 \\
        PA (auto) & 0.7980 & 0.9206 \\
        \bottomrule
      \end{tabular}
    \end{small}
  \end{center}
  \vspace{-1.5em}
\end{table}

\vspace{-1em}
\paragraph{Solution Quality.}
We compare the solution quality of our two heuristics by segmentation
(SEG) and tracking (TRA) metrics as used in~\cite{maska-2014} for \emph{Flywing-epithelium}. The
results are reported in \tableref{table:scores}. We observe that KLB
improves the scores of GLA slightly (up to an additional $1.2\,\%$ and
$0.81\,\%$ for SEG and TRA, respectively). The optimal ILP
solutions achieve slightly better scores in both measures
than the heuristics.  All presented algorithms outperform the
baseline, the \emph{packing analyzer}~\cite{aigouy-2010},
whose scores were originally reported in~\cite{jug-2016}.

\section{Conclusion}

We have introduced local search algorithms for the recently introduced
\ac{mlt}~\cite{jug-2016}, a mathematical framework for cell lineage
reconstruction, which treats both subproblems, image decomposition and
tracking, jointly. We propose two efficient heuristics for the
\ac{mlt}: a fast agglomerative procedure called GLA that constructs a
feasible lineage bottom-up, and a variant of the KL-algorithm which
attempts to improve a given lineage by switching nodes between
components, merging or splitting them. The latter algorithm repeatedly
solves a \ac{mcb} conditioned on fixed partitions. We show that this
subproblem can be solved as a minimum cost bipartite matching problem,
which is of independent interest.  Furthermore, we improve the
branch-and-cut algorithm of~\cite{jug-2016} by separating tighter
cutting planes and employing our result about the \ac{mcb}
subproblem. Our branch-and-cut algorithm solves previous instances
quickly to optimality.  For both the previous and larger instances,
our heuristics efficiently find high quality solutions. This
demonstrates empirically that our methods alleviate runtime issues
with \ac{mlt} instances and makes \emph{moral lineage tracing}
applicable in practice (\eg in~\cite{rempfler-2017-miccai}).

{\small \bibliographystyle{ieee} \bibliography{manuscript-lib} }

\begin{thebibliography}{10}\itemsep=-1pt

\bibitem{aigouy-2010}
B.~Aigouy, R.~Farhadifar, D.~B. Staple, A.~Sagner, J.-C. R{\"o}per,
  F.~J{\"u}licher, and S.~Eaton.
\newblock Cell flow reorients the axis of planar polarity in the wing
  epithelium of drosophila.
\newblock {\em Cell}, 142(5):773--786, 2010.

\bibitem{amat-2014}
F.~Amat, W.~Lemon, D.~P. Mossing, K.~McDole, Y.~Wan, K.~Branson, E.~W. Myers,
  and P.~J. Keller.
\newblock Fast, accurate reconstruction of cell lineages from large-scale
  fluorescence microscopy data.
\newblock {\em Nature methods}, 2014.

\bibitem{amat-2013}
F.~Amat, E.~W. Myers, and P.~J. Keller.
\newblock Fast and robust optical flow for time-lapse microscopy using
  super-voxels.
\newblock {\em Bioinformatics}, 29(3):373--380, 2013.

\bibitem{andres-2011}
B.~Andres, J.~H. Kappes, T.~Beier, U.~K\"othe, and F.~A. Hamprecht.
\newblock Probabilistic image segmentation with closedness constraints.
\newblock In {\em ICCV}, 2011.

\bibitem{andres-2012}
B.~Andres, T.~Kr\"oger, K.~L. Briggman, W.~Denk, N.~Korogod, G.~Knott,
  U.~K\"othe, and F.~A. Hamprecht.
\newblock Globally optimal closed-surface segmentation for connectomics.
\newblock In {\em ECCV}, 2012.

\bibitem{andres-2013}
B.~Andres, J.~Yarkony, B.~S. Manjunath, S.~Kirchhoff, E.~T\"uretken, C.~C.
  Fowlkes, and H.~Pfister.
\newblock Segmenting planar superpixel adjacency graphs w.r.t.~non-planar
  superpixel affinity graphs.
\newblock In {\em EMMCVPR}, 2013.

\bibitem{bagon-2011}
S.~Bagon and M.~Galun.
\newblock Large scale correlation clustering optimization.
\newblock {\em CoRR}, abs/1112.2903, 2011.

\bibitem{beier-2016}
T.~Beier, B.~Andres, U.~K\"othe, and F.~A. Hamprecht.
\newblock An efficient fusion move algorithm for the minimum cost lifted
  multicut problem.
\newblock In {\em ECCV}, 2016.

\bibitem{beier-2015}
T.~Beier, F.~A. Hamprecht, and J.~H. Kappes.
\newblock Fusion moves for correlation clustering.
\newblock In {\em CVPR}, pages 3507--3516, 2015.

\bibitem{beier-2014}
T.~Beier, T.~Kroeger, J.~H. Kappes, U.~K{\"o}the, and F.~A. Hamprecht.
\newblock Cut, glue, \&amp; cut: A fast, approximate solver for multicut
  partitioning.
\newblock In {\em CVPR}, pages 73--80, 2014.

\bibitem{berclaz-2011}
J.~Berclaz, F.~Fleuret, E.~T{\"u}retken, and P.~Fua.
\newblock Multiple object tracking using k-shortest paths optimization.
\newblock {\em IEEE transactions on pattern analysis and machine intelligence},
  33(9):1806--1819, 2011.

\bibitem{chen-2014}
B.-C. Chen, W.~R. Legant, K.~Wang, L.~Shao, D.~E. Milkie, M.~W. Davidson,
  C.~Janetopoulos, X.~S. Wu, J.~A. Hammer, Z.~Liu, et~al.
\newblock Lattice light-sheet microscopy: Imaging molecules to embryos at high
  spatiotemporal resolution.
\newblock {\em Science}, 346(6208):1257998, 2014.

\bibitem{chenouard-2014}
N.~Chenouard, I.~Smal, F.~De~Chaumont, M.~Maska, I.~F. Sbalzarini, Y.~Gon,
  J.~Cardinale, C.~Carthel, S.~Coraluppi, M.~Winter, A.~R. Cohen, W.~J.
  Godinez, K.~Rohr, Y.~Kalaidzidis, L.~Liang, J.~Duncan, H.~Shen, Y.~Xu,
  K.~Magnusson, J.~Jalden, H.~M. Blau, P.~Paul-Gilloteaux, P.~Roudot,
  C.~Kervrann, F.~Waharte, J.-Y. Tinevez, S.~L. Shorte, J.~Willemse, K.~Celler,
  G.~P. Van~Wezel, H.-W. Dan, Y.-S. Tsai, C.~Ortiz De~Solorzano, J.-C.
  Olivo-Marin, and E.~Meijering.
\newblock {Objective comparison of particle tracking methods}.
\newblock {\em {Nature Methods}}, 11(3):281--289, 2014.

\bibitem{chopra-1993}
S.~Chopra and M.~R. Rao.
\newblock The partition problem.
\newblock {\em Mathematical Programming}, 59(1):87--115, 1993.

\bibitem{funke-2012}
J.~Funke, B.~Andres, F.~A. Hamprecht, A.~Cardona, and M.~Cook.
\newblock Efficient automatic 3d-reconstruction of branching neurons from em
  data.
\newblock In {\em CVPR}, pages 1004--1011. IEEE, 2012.

\bibitem{greenbaum-2012}
A.~Greenbaum, W.~Luo, T.-W. Su, Z.~G{\"o}r{\"o}cs, L.~Xue, S.~O. Isikman, A.~F.
  Coskun, O.~Mudanyali, and A.~Ozcan.
\newblock Imaging without lenses: achievements and remaining challenges of
  wide-field on-chip microscopy.
\newblock {\em Nature methods}, 9(9):889--895, 2012.

\bibitem{guillot-2013}
C.~Guillot and T.~Lecuit.
\newblock Mechanics of epithelial tissue homeostasis and morphogenesis.
\newblock {\em Science}, 340(6137):1185--1189, 2013.

\bibitem{gurobi}
{Gurobi Optimization, Inc.}
\newblock Gurobi optimizer reference manual, 2016.

\bibitem{hornakova-2017}
A.~Hor\v{n}\'akov\'a, J.-H. Lange, and B.~Andres.
\newblock Analysis and optimzation of graph decompositions by lifted multicuts.
\newblock In {\em ICML}, 2017.

\bibitem{insafutdinov-2017}
E.~Insafutdinov, M.~Andriluka, L.~Pishchulin, S.~Tang, E.~Levinkov, B.~Andres,
  and B.~Schiele.
\newblock {ArtTrack}: Articulated multi-person tracking in the wild.
\newblock In {\em CVPR}, 2017.

\bibitem{jug-2016}
F.~Jug, E.~Levinkov, C.~Blasse, E.~W. Myers, and B.~Andres.
\newblock Moral lineage tracing.
\newblock In {\em CVPR}, 2016.

\bibitem{jug-2014}
F.~Jug, T.~Pietzsch, D.~Kainm{\"u}ller, J.~Funke, M.~Kaiser, E.~van Nimwegen,
  C.~Rother, and G.~Myers.
\newblock Optimal joint segmentation and tracking of escherichia coli in the
  mother machine.
\newblock In {\em Bayesian and grAphical Models for Biomedical Imaging}, pages
  25--36. Springer, 2014.

\bibitem{kappes-2016}
J.~H. Kappes, M.~Speth, G.~Reinelt, and C.~Schn{\"o}rr.
\newblock Higher-order segmentation via multicuts.
\newblock {\em Computer Vision and Image Understanding}, 143:104--119, 2016.

\bibitem{kausler-2012}
B.~X. Kausler, M.~Schiegg, B.~Andres, M.~Lindner, U.~Koethe, H.~Leitte,
  J.~Wittbrodt, L.~Hufnagel, and F.~A. Hamprecht.
\newblock A discrete chain graph model for 3d+ t cell tracking with high
  misdetection robustness.
\newblock In {\em ECCV}, pages 144--157. Springer, 2012.

\bibitem{keller-2010}
P.~J. Keller, A.~D. Schmidt, A.~Santella, K.~Khairy, Z.~Bao, J.~Wittbrodt, and
  E.~H. Stelzer.
\newblock Fast, high-contrast imaging of animal development with scanned light
  sheet-based structured-illumination microscopy.
\newblock {\em Nature methods}, 7(8):637--642, 2010.

\bibitem{keller-2008}
P.~J. Keller, A.~D. Schmidt, J.~Wittbrodt, and E.~H. Stelzer.
\newblock Reconstruction of zebrafish early embryonic development by scanned
  light sheet microscopy.
\newblock {\em Science}, 322(5904):1065--1069, 2008.

\bibitem{kernighan-1970}
B.~W. Kernighan and S.~Lin.
\newblock An efficient heuristic procedure for partitioning graphs.
\newblock {\em Bell system technical journal}, 49(2):291--307, 1970.

\bibitem{keuper-2015}
M.~Keuper, E.~Levinkov, N.~Bonneel, G.~Lavou\'{e}, T.~Brox, and B.~Andres.
\newblock Efficient decomposition of image and mesh graphs by lifted multicuts.
\newblock In {\em ICCV}, pages 1751--1759, 2015.

\bibitem{kim-2014}
S.~Kim, C.~D. Yoo, S.~Nowozin, and P.~Kohli.
\newblock Image segmentation usinghigher-order correlation clustering.
\newblock {\em IEEE Transactions on Pattern Analysis and Machine Intelligence},
  36(9):1761--1774, 2014.

\bibitem{kuhn-1955}
H.~W. Kuhn.
\newblock The hungarian method for the assignment problem.
\newblock {\em Naval research logistics quarterly}, 2(1-2):83--97, 1955.

\bibitem{levinkov-2017}
E.~Levinkov, J.~Uhrig, S.~Tang, M.~Omran, E.~Insafutdinov, A.~Kirillov,
  C.~Rother, T.~Brox, B.~Schiele, and B.~Andres.
\newblock Joint graph decomposition and node labeling: Problem, algorithms,
  applications.
\newblock In {\em CVPR}, 2017.

\bibitem{li-2008}
K.~Li, E.~D. Miller, M.~Chen, T.~Kanade, L.~E. Weiss, and P.~G. Campbell.
\newblock Cell population tracking and lineage construction with spatiotemporal
  context.
\newblock {\em Medical image analysis}, 12(5):546--566, 2008.

\bibitem{maska-2014}
M.~Ma{\v{s}}ka, V.~Ulman, D.~Svoboda, P.~Matula, P.~Matula, C.~Ederra,
  A.~Urbiola, T.~Espa{\~n}a, S.~Venkatesan, D.~M.~W. Balak, P.~Karas,
  T.~Bolckov\'{a}, M.~Streitov\'{a}, C.~Carthel, S.~Coraluppi, N.~Harder,
  K.~Rohr, K.~E.~G. Magnusson, J.~Jaldén, H.~M. Blau, O.~Dzyubachyk,
  P.~K\v{r}{\'\i}\v{z}ek, G.~M. Hagen, D.~Pastor-Escuredo,
  D.~Jimenez-Carretero, M.~J. Ledesma-Carbayo, A.~Mu{\~n}oz-Barrutia,
  E.~Meijering, M.~Kozubek, and C.~Ortiz-de Solorzano.
\newblock A benchmark for comparison of cell tracking algorithms.
\newblock {\em Bioinformatics}, 30(11):1609--1617, 2014.

\bibitem{meijering-2012}
E.~Meijering, O.~Dzyubachyk, and I.~Smal.
\newblock Methods for cell and particle tracking.
\newblock {\em Methods in Enzymology}, 504(9):183--200, 2012.

\bibitem{munkres-1957}
J.~Munkres.
\newblock Algorithms for the assignment and transportation problems.
\newblock {\em Journal of the society for industrial and applied mathematics},
  5(1):32--38, 1957.

\bibitem{padfield-2011}
D.~Padfield, J.~Rittscher, and B.~Roysam.
\newblock Coupled minimum-cost flow cell tracking for high-throughput
  quantitative analysis.
\newblock {\em Medical image analysis}, 15(4):650--668, 2011.

\bibitem{rempfler-2016}
M.~Rempfler, B.~Andres, and B.~Menze.
\newblock The minimum cost connected subgraph problem in medical image
  analysis.
\newblock In {\em MICCAI}, pages 397--405, 2016.

\bibitem{rempfler-2017-miccai}
M.~Rempfler, S.~Kumar, V.~Stierle, P.~Paulitschke, B.~Andres, and B.~Menze.
\newblock Cell lineage tracing in lens-free microscopy videos.
\newblock In {\em MICCAI}, 2017.
\newblock (in press).

\bibitem{rempfler-2015-media}
M.~Rempfler, M.~Schneider, G.~D. Ielacqua, X.~Xiao, S.~R. Stock, J.~Klohs,
  G.~Sz\'{e}kely, B.~Andres, and B.~H. Menze.
\newblock Reconstructing cerebrovascular networks under local physiological
  constraints by integer programming.
\newblock {\em Medical Image Analysis}, 25(1):86 -- 94, 2015.

\bibitem{schiegg-2015}
M.~Schiegg, P.~Hanslovsky, C.~Haubold, U.~Koethe, L.~Hufnagel, and F.~A.
  Hamprecht.
\newblock Graphical model for joint segmentation and tracking of multiple
  dividing cells.
\newblock {\em Bioinformatics}, 31:948--956, 2015.

\bibitem{schiegg-2013}
M.~Schiegg, P.~Hanslovsky, B.~X. Kausler, L.~Hufnagel, and F.~A. Hamprecht.
\newblock Conservation tracking.
\newblock In {\em ICCV}, pages 2928--2935, 2013.

\bibitem{tang-2015}
S.~Tang, B.~Andres, M.~Andriluka, and B.~Schiele.
\newblock Subgraph decomposition for multi-target tracking.
\newblock In {\em CVPR}, pages 5033--5041, 2015.

\bibitem{tang-2017}
S.~Tang, M.~Andriluka, B.~Andres, and B.~Schiele.
\newblock Multiple people tracking by lifted multicut and person
  re-identification.
\newblock In {\em CVPR}, 2017.

\bibitem{tomer-2012}
R.~Tomer, K.~Khairy, F.~Amat, and P.~J. Keller.
\newblock Quantitative high-speed imaging of entire developing embryos with
  simultaneous multiview light-sheet microscopy.
\newblock {\em Nature methods}, 9(7):755--763, 2012.

\bibitem{turetken-2016}
E.~T\"{u}retken, F.~Benmansour, B.~Andres, P.~Glowacki, H.~Pfister, and P.~Fua.
\newblock Reconstructing curvilinear networks using path classifiers and
  integer programming.
\newblock {\em IEEE Transactions on Pattern Analysis and Machine Intelligence},
  38(12):2515--2530, 2016.

\bibitem{turetken-2011}
E.~T{\"u}retken, G.~Gonz{\'a}lez, C.~Blum, and P.~Fua.
\newblock Automated reconstruction of dendritic and axonal trees by global
  optimization with geometric priors.
\newblock {\em Neuroinformatics}, 9(2-3):279--302, 2011.

\bibitem{wang-2014}
X.~Wang, E.~T{\"u}retken, F.~Fleuret, and P.~Fua.
\newblock Tracking interacting objects optimally using integer programming.
\newblock In {\em ECCV}, pages 17--32. Springer, 2014.

\bibitem{yarkony-2012}
J.~Yarkony, A.~Ihler, and C.~C. Fowlkes.
\newblock Fast planar correlation clustering for image segmentation.
\newblock In {\em ECCV}, pages 568--581, 2012.

\bibitem{yarkony-2015}
J.~E. Yarkony and C.~Fowlkes.
\newblock Planar ultrametrics for image segmentation.
\newblock In C.~Cortes, N.~D. Lawrence, D.~D. Lee, M.~Sugiyama, and R.~Garnett,
  editors, {\em NIPS}, pages 64--72. Curran Associates, Inc., 2015.

\bibitem{zervantonakis-2012}
I.~K. Zervantonakis, S.~K. Hughes-Alford, J.~L. Charest, J.~S. Condeelis, F.~B.
  Gertler, and R.~D. Kamm.
\newblock Three-dimensional microfluidic model for tumor cell intravasation and
  endothelial barrier function.
\newblock {\em Proceedings of the National Academy of Sciences},
  109(34):13515--13520, 2012.

\end{thebibliography}

\end{document}